
\typeout{IJCAI--24 Instructions for Authors}


\documentclass{article}
\pdfpagewidth=8.5in
\pdfpageheight=11in

\usepackage{ijcai24}

\usepackage{times}
\usepackage{soul}
\usepackage{url}
\usepackage[hidelinks]{hyperref}
\usepackage[utf8]{inputenc}
\usepackage[small]{caption}
\usepackage{graphicx}
\usepackage{amsmath}
\usepackage{amsthm}
\usepackage{booktabs}

\usepackage[switch]{lineno}

\usepackage{amsmath,amssymb,amsfonts}
\usepackage{textcomp}
\usepackage{color}

\usepackage{threeparttable}

\usepackage{algorithmic}
\usepackage[linesnumbered,boxed,ruled,commentsnumbered]{algorithm2e}


\urlstyle{same}



\newtheorem{theorem}{Theorem}

\SetKwInOut{Input}{\textbf{Input}}
\SetKwInOut{Output}{\textbf{Output}}
\SetKw{Continue}{continue}

\pdfinfo{
/TemplateVersion (IJCAI.2024.0)
}

\title{FlagVNE: A Flexible and Generalizable Reinforcement Learning  Framework \\ for Network Resource Allocation}

\author{
    Tianfu Wang$^1$
    \and
    Qilin Fan$^2$
    \and
    Chao Wang$^{3,1}$\thanks{Chao Wang and Hui Xiong are both corresponding authors}
    \and
    \\
    Long Yang$^4$ 
    \and
    Leilei Ding$^1$
    \and
    Nicholas Jing Yuan$^2$
    \And
    Hui Xiong$^{6,7}$$^*$
    \\
    \affiliations
    $^1$School of Computer Science and Technology, University of Science and Technology of China, China
    \\
    $^2$School of Big Data and Software Engineering, Chongqing University, China
    \quad
    $^3$Guangzhou HKUST Fok Ying Tung Research Institute, China
    \quad
    $^4$School of Artificial Intelligence, Peking University, China
    \\
    $^5$Microsoft
    \quad
    $^6$Thrust of Artificial Intelligence, The Hong Kong University of Science and Technology (Guangzhou), China
    \quad
    $^7$Department of Computer Science and Engineering, The Hong Kong University of Science and Technology, Hong Kong SAR, China
    \\
    \emails
    \{tianfuwang,dingleilei\}@mail.ustc.edu.cn, fanqilin@cqu.edu.cn,chadwang2012@gmail.com\\ yanglong001@pku.edu.cn,nicholas.yuan@microsoft.com,xionghui@ust.hk
}

\begin{document}

\maketitle
\begin{abstract}
Virtual network embedding (VNE) is an essential resource allocation task in network virtualization, aiming to map virtual network requests (VNRs) onto physical infrastructure. Reinforcement learning (RL) has recently emerged as a promising solution to this problem. However, existing RL-based VNE methods are limited by the unidirectional action design and one-size-fits-all training strategy, resulting in restricted searchability and generalizability. In this paper, we propose a \textbf{FL}exible \textbf{A}nd \textbf{G}eneralizable RL framework for \textbf{VNE}, named \textbf{FlagVNE}.  Specifically, we design a bidirectional action-based Markov decision process model that enables the joint selection of virtual and physical nodes, thus improving the exploration flexibility of solution space. To tackle the expansive and dynamic action space, we design a hierarchical decoder to generate adaptive action probability distributions and ensure high training efficiency. Furthermore, to overcome the generalization issue for varying VNR sizes, we propose a meta-RL-based training method with a curriculum scheduling strategy, facilitating specialized policy training for each VNR size. Finally, extensive experimental results show the effectiveness of FlagVNE across multiple key metrics. 
Our code is available at GitHub\footnote{\href{https://github.com/GeminiLight/flag-vne}{https://github.com/GeminiLight/flag-vne}}.
\end{abstract}
\section{Introduction}
Network virtualization (NV) emerges as a pioneering technology that facilitates dynamic management of Internet architecture, which finds applications in 5G networks and cloud computing~\cite{vne-nfv-sdn}. Through network slicing and shared infrastructure, NV enables the deployment of multiple user-submitted virtual network requests (VNRs) within the same physical network, thereby accommodating diverse network service requirements of users \cite{vne-nfv,chen2022entity}.
The primary challenge in NV involves the embedding of VNRs within a physical network, known as virtual network embedding (VNE), an NP-hard combinatorial optimization problem~\cite{vne-nphard}.

Effective resource allocation for VNRs is essential to improve the quality of service and the revenue of Internet service providers (ISPs)~\cite{wang2021personalized,liwei-kdd22}.
Regrettably, it is hard to address the VNE problem involving tackling combinatorial explosion and differentiated demands~\cite{vne-survey,yang-aaai22}.
On the one hand, the solution space of VNE is extensive, encompassing vast permutations of VNRs within the underlying physical network. Consequently, a comprehensive exploration of this expansive solution space becomes imperative to determine superior solutions.
On the other hand, due to specific requirements of user service, the integration of diverse VNR topologies and their associated resource demands is dynamic.
VNRs of varying sizes manifest unique complexities, rendering a one-size-fits-all strategy inadequate to effectively manage the inherent variability in such circumstances.

Recently, reinforcement learning (RL) has shown promising potential for the VNE problem~\cite{vne-jsac-2020-a3c-gcn,vne-tpds-2023-att,vne-tsc-2023-gcn-mask}. RL approaches model the solution construction process of each VNR as Markov decision processes (MDPs), which can automatically build efficient solving policies. Unlike supervised learning relying on labeled data, RL facilitates the learning of effective heuristics through interactions with the environment. 
However, most existing RL-based VNE approaches are still plagued with some significant issues. 
Firstly, these approaches commonly adhere to a unidirectional action design within the MDP, i.e., presupposing a fixed decision sequence for virtual nodes, and subsequently designating a physical node to host each virtual node sequentially. Such unidirectional action schema significantly limits the available action space, consequently constraining the searchability of the agent and impeding the efficacy of exploring solution space.
Secondly, conventional RL-based methods usually just train a single general policy, disregarding the distinctive complexities of VNRs with varying sizes in practice. Treating variable-sized VNRs equally poses challenges in achieving balanced learning of cross-size strategic knowledge and hinders the ability to generalize across VNRs of differing sizes.
Thirdly, the direct training of multiple policies tailored to different VNR sizes slowly adapts to unseen distributions. In particular, training specific policies for large-sized VNRs from scratch tends to be stuck in the local optimum, due to the high complexity and challenges in exploring feasible solutions.
These difficulties inevitably exert negative impacts on overall system performance. We conduct a preliminary study to highlight our motivations and latent challenges, which is detailed in Appendix~\ref{appendix:preliminary-study}.

In this paper, we propose a novel \textbf{FL}exible \textbf{A}nd \textbf{G}eneralizable RL framework for the \textbf{VNE} problem, named \textbf{FlagVNE}.
Our framework aims to enhance the searchability and generalizability of RL-based VNE methods while achieving rapid adaption to the unseen distribution of VNR sizes. 
Specifically, our contributions are summarized as follows. (1) We propose a bidirectional action-based MDP modeling approach to enable the joint selection of virtual and physical nodes, enhancing the flexibility of agent exploration and exploitation. This method offers superior searchability and is proven theoretically. To handle the resulting large and changeable action space, we abstract it as two dependent aspects and design a hierarchical decoder with a bilevel policy, ensuring adaptive action probability distribution generation and high training efficiency.
(2) We propose a meta-RL-based training method to enable efficient acquisition of multiple size-specific policies and quick adaptation to new sizes. A meta-policy is trained to grasp cross-size knowledge for different VNR sizes and then fastly fine-tuned to develop size-specific policies for each VNR size, even unseen sizes. 
Specially, due to difficult exploration and prone to suboptimal convergence, using large-sized VNRs for initial meta-learning yields inferior knowledge, impairing the meta-policy and generalization. Thus, we develop a curriculum scheduling strategy that gradually incorporates larger VNRs, alleviating suboptimal convergence.
(3) Finally, we conduct experiments on the simulation platform to mimic various network systems and extensive results demonstrate the superiority of FlagVNE in terms of multiple key indicators, compared to state-of-the-art (SOTA) heuristics and RL-based methods.
\section{Related Work}
\label{section:background-and-motivation}
In this section, we review the related work on VNE.

\textbf{Traditional Methods for VNE.} Initially, the VNE problem was tackled using exact methods such as integer linear programming \cite{vne-jsac-2018-ilp}, which provides optimal solutions through exact solvers. However, these exact algorithms proved impractical for real-world scenarios due to their time-consuming nature. Thus, numerous heuristic algorithms have been proposed to find solutions in an acceptable time~\cite{vne-ton-2014-pso,vne-infocom-2020-latency,vne-tpds-2023-nea}. Among these approaches, node ranking is a prevalent strategy, which ranks virtual and physical nodes to determine the decision sequence and the matching priority, respectively. For example, \cite{vne-iotj-2018-nrm} ranked nodes based on a node resource management (NRM) metric, and \cite{vne-tpds-2023-nea} proposed a node essentiality assessment (NEA) metric considering topology connectivity. Additionally, \cite{vne-jsac-2019-fitness} designed VNE algorithms based on metaheuristics, such as particle swarm optimization (PSO). However, these algorithms heavily rely on manual designs and are usually tailored to specific scenarios, limiting their performance in general cases.


\textbf{Learning-based Methods for VNE.}
Recently, machine learning techniques have been used to solve VNE, leading to faster and more efficient solutions~\cite{vne-infocom-2018-neuro,kdd-2023-gal-vne,vne-tpds-2023-att}. 
Particularly, RL has demonstrated significant potential as an intelligent decision-making framework~\cite{liwei-icde23,yang-nips22}, which can effectively solve VNE with MDP modeling. In this paper, we unify most existing RL-based methods~\cite{vne-iwqos-2019-mlp,vne-icc-2021-drl-sfcp,vne-jsac-2020-a3c-gcn,vne-tnsm-2020-pg-rnn,vne-tnsm-2022-pg-cnn,vne-tsc-2023-gcn-mask} 
into a general framework comprised of three key components: \textit{MDP modeling}, \textit{policy architecture}, and \textit{training methods}. These methods model the process of VNE solution construction as unidirectional action-based MDPs, where a physical node is chosen to host a be-placing virtual node, and the decision sequence of virtual nodes is fixed. Then they build policy models with various neural networks and train a single general policy to deal with VNRs of varying sizes. For instance, \cite{vne-iwqos-2019-mlp} used multilayer perception (MLP) as a policy model and trained it with policy gradient (PG) algorithm, \cite{vne-tsc-2023-gcn-mask} designed a policy model with MLP and graph convolutional network (GCN) \cite{gnn-gcn} and trained it with asynchronous advantage actor-critic (A3C) \cite{rl-icml-2016-a3c}.
However, existing RL-based VNE methods suffer from limited searchability and generalizability due to their unidirectional action design and one-size-fits-all training policy, ultimately affecting overall performance.

\section{Preliminaries}

\subsection{Problem Definition}

\begin{figure}[t]
    \centering
    \includegraphics[width=.42\textwidth]{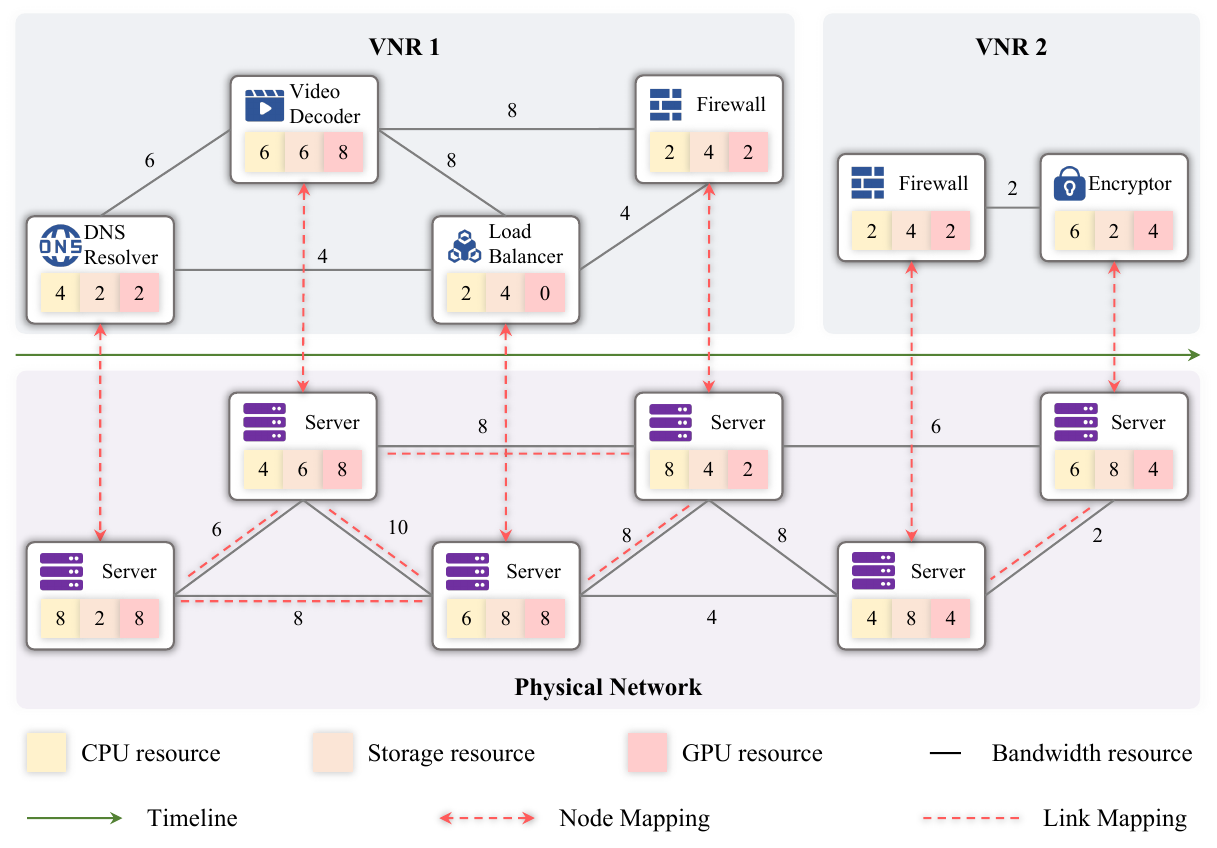}
    \vspace{-1em}
    \caption{An example of the VNE problem with multidimensional resources. The numbers denote the unit counts of resources.}
    \label{fig:vne-example}
    \vspace{-1.4em}
\end{figure}

As shown in Fig.~\ref{fig:vne-example}, in a practical network system, users' service requests arriving continuously are represented as VNRs. We collect all VNRs with a set $\mathcal{V}$. Mapping these VNRs onto physical networks managed by ISPs is known as VNE, crucial in managing the quality of various network services~\cite{chen2020mmea,wang2023setrank,chen2022msnea}.

\textbf{System Modeling}. \textit{Physical network} is formulated as a weighted undirected graph $\mathcal{G}^p = (\mathcal{N}^p, \mathcal{L}^p)$, where $\mathcal{N}^p$ is the set of physical nodes, $\mathcal{L}^p$ is the set of physical links.
Each physical node $n^p \in \mathcal{N}^p$ is equipped with multiple resource capacities $\{C(n^p), \forall C\in\mathcal{C}\}$, where $\mathcal{C}$ is the set of node resource types, and each physical link $l^p \in \mathcal{L}^p$ has bandwidth capacity $B(l^p)$.
In this paper, we consider multidimensional node resources, including the central processing unit (CPU), storage resource, and graphics processing unit (GPU).
Similarly, each \textit{VNR} is modeled as a weighted undirected graph $G^v = (\mathcal{N}^v, \mathcal{L}^v, d^v)$, where $\mathcal{N}^v$ is the set of virtual nodes and $\mathcal{L}^v$ is the set of virtual links, and $d^v$ denotes the lifetime of VNR. Once the VNR is accepted, it will be maintained for $d^v$ time slots. Each virtual node $n^v \in \mathcal{N}^v$ represents a virtual machine with resource demands $\{C(n^v), \forall C \in \mathcal{C}\}$ and each virtual link $l^v \in \mathcal{L}^v$ indicates the bandwidth demand $B(l^v)$. 

\textbf{Objective.} Acknowledging the stochastic nature of online networking, most existing methods and this work aim to minimize the embedding cost of each VNR onto the physical network, which facilitates long-term performance. 
The quality of solutions is assessed using the revenue-to-cost ratio (R2C):
\begin{equation}
    \text{R2C} \left(\mathcal{G}^v\right) = \left(\Psi \cdot \text{REV} \left(\mathcal{G}^v\right) \right) / \text{COST}\left(\mathcal{G}^v\right).
\end{equation}
Here, $\text{REV} (\mathcal{G}^v)$ denotes the revenue of the VNR $G^v$ (i.e., the sum of VNR's resource requirements) and $\text{COST} (\mathcal{G}^v)$ denotes the embedding cost resulting from the solution (i.e., the sum of ISP' resource consumption). $\Psi$ is the binary variable that indicates the feasibility of a solution.

\textbf{Constraints.} The VNR embedding consists of two sub-processes. (1) \textit{Node mapping} entails assigning each virtual node to a physical node with adequate resources, i.e., $C(n^p) \geq C(n^v), \forall C \in \mathcal{C}$, while ensuring one-to-one placement and mutual exclusivity. (2) \textit{Link mapping} involves finding a physical path for each virtual link, ensuring that the path connects the physical nodes hosting the virtual link endpoints and that each physical link $l^p$ in the path has sufficient bandwidth, i.e., $B(l^p) \geq B(l^v)$. A solution is deemed feasible ($\Psi = 1$) only when all these constraints are satisfied.

Due to the space limit, we place detailed formulations of VNE's objective and constraints in Appendix~\ref{appendix:problem-formulation}. 

\subsection{Motivations and Challenges}
We conduct a preliminary study placed in Appendix~\ref{appendix:preliminary-study}, and 
motivate our framework from the following two aspects.

\textbf{Flexibility of Action Space.} 
Most existing RL-based VNE approaches employ a unidirectional action design, assuming that the decision sequence of virtual nodes is predetermined.
However, our analysis in Appendix~\ref{appendix:flexibility-of-action-space} reveals that varying the decision sequences of virtual nodes significantly impacts performance. This underscores the necessity of exploring different decision sequences for optimal solutions. Moreover, the fixed decision sequence of virtual nodes lacks the flexibility needed to adapt to the dynamic nature of exploration process. Thus,
to enhance the flexibility of exploration and exploitation,
we aim to achieve a joint selection of both physical and virtual nodes to eliminate the fixed decision sequence.
Nevertheless, it will pose some challenges, such as the difficulty of variable action probability distribution generation and the training efficiency issue caused by large action space.

\textbf{Generalization of Solving Policy.} 
VNRs of different sizes exhibit distinct complexities, necessitating varied solving strategies. Existing RL-based methods typically use a one-size-fits-all policy to tackle VNRs of varying sizes, leading to generalization issues. To address this, an intuitive approach might be to develop size-specific policies for different VNR sizes. Yet, as observed in Appendix~\ref{appendix:generalization-of-solving-policy}, specific policies for large-sized VNRs trained from scratch often get stuck in local optima due to their high complexity and the difficulty in exploring viable solutions. 
Their performance is even inferior to that of the general policy for all sizes.
Furthermore, this strategy lacks the quick adaptability to handle previously unseen VNR sizes, since it requires extensive data demand.

\section{FlagVNE Framework}
\begin{figure*}[t]
    \centering
    \includegraphics[width=.92\textwidth]{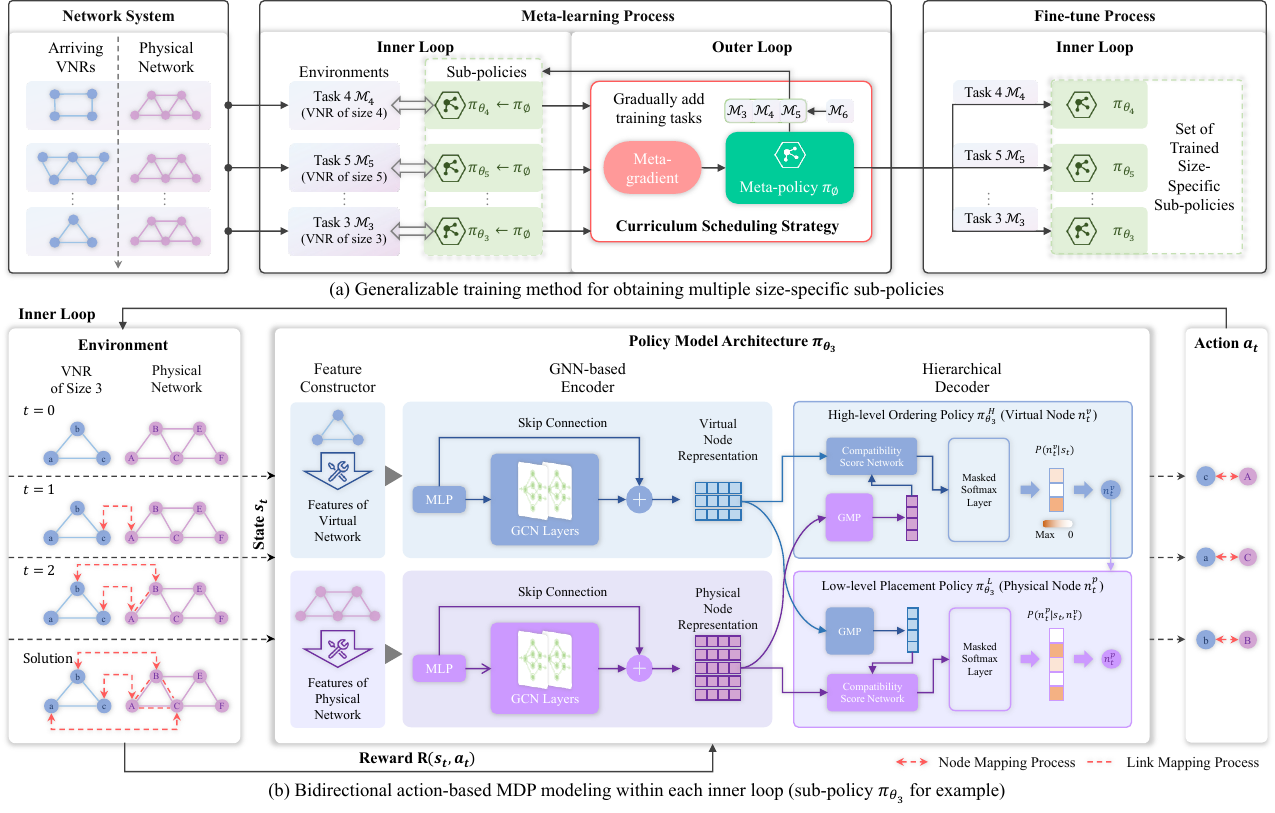}
    \vspace{-0.5em}
    \caption{The overview of the FlagVNE framework.
    (a) For vary-sized VNRs that continuously arrive at the network system, we consider them as different tasks $\mathcal{M}_i \sim p(\mathcal{M})$ based on their size. We first train a meta-policy $\pi_{\phi}$ with cross-task knowledge in the meta-learning process, using a curriculum scheduling strategy. Then, we fine-tune it to obtain a set of size-specific sub-policies $\pi_{{\theta}_i}$. This generalizable training method effectively obtains refined solving policies for each VNR size. (b) Within each inner loop, we formulate the solution construction process of each VNR as a bidirectional action-based MDP, which enables the joint selection of virtual and physical nodes. We also design a hierarchical encoder with a bilevel policy to adaptively generate action probability distributions and ensure high training efficiency.
   }
    \label{fig:mrl-framework}
    \vspace{-1.0em}
\end{figure*}

In this section, we present the proposed RL-based framework for VNE, \textbf{FlagVNE}. As illustrated in Fig.~\ref{fig:mrl-framework}, FlagVNE is designed to improve searchability and generalizability while achieving rapid adaptation to unseen distribution.

\subsection{Bidirectional Action-based MDP}
We formulate the solution construction process of each VNR as a bidirectional action-based MDP, allowing joint selection of virtual nodes and physical nodes. Specifically, at each decision timestep $t$, observing the state $s_t$ of the environment, the agent takes an action $a_t \sim \pi (\cdot | s_t)$ according to the policy $\pi$. Then, the environment will feedback a reward $R(s_t, a_t)$ and transit to a new state $s_{t+1} \sim P(s_t, a_t)$ following the transition probability function. During interactions, a trajectory memory $\mathcal{D} = \{s_1, a_1, s_2, a_2, \cdots\}$ collects state-action pairs. 
We present these notations in VNE as follows.

\textit{State} represents the status of the network system at a specific decision timestep $t$, consisting of the current situation of VNR $s_t^v$ and physical network $s_t^p$, i.e., $s_t = (s_t^v, s_t^p), s_t \in \mathcal{S}$, where $\mathcal{S}$ is the state space. 

\textit{Action} is defined as a pair of a virtual node to be placed and a physical node to host, denoted $a_t = (n^v, n^p), a_t \in \mathcal{A}$, where $n^v \in \mathcal{N}^v$, $n^p \in \mathcal{N}^p$, and $\mathcal{A}$ is the action space. 

\textit{Transition} $P(s_{t+1} \mid s_t, a_t)$ refers to the process of placing the virtual node $n^v$ to the physical node $n^p$ and routing the virtual links, resulting in the changes of state from $s_t$ to $s_{t+1}$. Based on the selected bidirectional action, the environment attempts to place the virtual node $n^v$ to the physical node $n^p$. If the node placement is successful, the link routing is executed based on the breadth-first search algorithm that finds the shortest physical paths meeting bandwidth demands from $n^p$ to other physical nodes hosting the virtual node neighbors of $n^v$. If node placement and link routing are successful, the available resource of the physical network is updated with the VNR requirement. Otherwise, the current VNR is rejected.

\textit{Reward} $R$ measures the quality of agent's action at a given state. We define the reward  function $R$ as follows:
\begin{equation}
    R(s_t, a_t) = \begin{cases}
        \text{R2C} (\mathcal{G}^v), & \text{if $\mathcal{G}^v$ is accepted at $t$}, \\
        -1 / |\mathcal{N}^v|, & \text{if $\mathcal{G}^v$ is rejected at $t$}, \\
       1 / |\mathcal{N}^v|, & \text{otherwise}.
    \end{cases}
\end{equation}
We design implicit rewards to encourage successful placement with $1 / |\mathcal{N}^v|$ and punish failure with $-1 / |\mathcal{N}^v|$. Once the $\mathcal{G}^v$ is completed embedding, we return $\text{R2C}(\mathcal{G}^v)$.

\textit{Policy} is parameterized by $\theta$, which denotes the distributions over the action space under a given state $s_t$:
\begin{equation}
    \pi_{\theta}(a_t | s_t) = P(a_t | s_t).
\end{equation}

\textit{Discount factor} $\lambda \in (0, 1)$ balances the importance of immediate rewards versus future rewards. Overall, the optimization objective of RL is to maximize the expected return, i.e., cumulative discounted rewards over timesteps $T$:
\begin{equation}
    J_{\pi} = \mathbb{E}_{(s_t, a_t) \sim \mathcal{D}} [ \sum_{t=0}^{T} \lambda^t R(s_t, a_t)].
\end{equation}

If $J_{\pi} \ge J_{\pi^{\prime}}$, then we denote it as $\pi \succeq \pi^{\prime}$.

\begin{theorem}
Given two MDPs with bidirectional and unidirectional action, $\mathcal{M}^b = \langle \mathcal{S}^b, \mathcal{A}^b, P^b, R, \lambda \rangle$ and $\mathcal{M}^u = \langle\mathcal{S}^u, \mathcal{A}^u, {P}^u, {R}, \lambda\rangle$, and their optimal policies denoted as $\pi^{\star, b}$ and $\pi^{\star, u}$, respectively, we have $\pi^{\star, b} \succeq \pi^{\star, u}$.
\label{theorem:mdp}
\end{theorem}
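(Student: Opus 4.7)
My approach is to show that $\mathcal{M}^u$ is an action-restricted version of $\mathcal{M}^b$, so that every unidirectional policy lifts to a bidirectional policy with the same return; applying the lift to $\pi^{\star, u}$ then gives a lower bound on the value of $\pi^{\star, b}$.

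First, I would make the embedding of $\mathcal{M}^u$ into $\mathcal{M}^b$ explicit. In $\mathcal{M}^u$, the fixed virtual-node ordering determines, at each step, a unique virtual node $n^v(s_t^u)$ to be placed next, so the effective action set collapses to $\mathcal{A}^u(s_t^u) = \{(n^v(s_t^u), n^p) : n^p \in \mathcal{N}^p\}$, which is a subset of the bidirectional action set enumerating all (unplaced virtual node, physical node) pairs. Since the transition (node placement plus BFS link routing) and the reward (the implicit $\pm 1/|\mathcal{N}^v|$ shaping and the terminal $\text{R2C}$) depend only on the executed pair $(n^v, n^p)$, any trajectory realizable in $\mathcal{M}^u$ is also realizable in $\mathcal{M}^b$.

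Next, given any unidirectional policy $\pi^u$, I would define a bidirectional lift $\widetilde{\pi}^b$ by $\widetilde{\pi}^b(n^v, n^p \mid s_t^b) = \pi^u(n^p \mid s_t^u)$ whenever $n^v = n^v(s_t^u)$ and zero otherwise, where $s_t^u$ and $s_t^b$ are matched through the obvious correspondence of physical-network and VNR status. A straightforward induction on $t$ then shows that the trajectory distributions of $(\pi^u, \mathcal{M}^u)$ and $(\widetilde{\pi}^b, \mathcal{M}^b)$ coincide, so $J_{\widetilde{\pi}^b} = J_{\pi^u}$. Specializing this to $\pi^u = \pi^{\star, u}$ and invoking the optimality of $\pi^{\star, b}$ over the full bidirectional policy class yields
\[
J_{\pi^{\star, b}} \;\geq\; J_{\widetilde{\pi}^b} \;=\; J_{\pi^{\star, u}},
\]
i.e., $\pi^{\star, b} \succeq \pi^{\star, u}$.

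I expect the main obstacle to be bookkeeping around the state correspondence, because $s_t^u$ implicitly carries the next-to-place pointer dictated by the fixed ordering while $s_t^b$ does not. The cleanest remedy is to augment the bidirectional state with this pointer whenever we simulate a unidirectional policy, which is free: enriching the state can only weakly increase the bidirectional optimum, since $\pi^{\star, b}$ may ignore the extra coordinate. Once that identification is pinned down, the theorem reduces to the elementary principle that maximizing a fixed objective over a superset of policies cannot decrease its optimum.
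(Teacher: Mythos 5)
Your proof is correct and rests on the same underlying idea as the paper's: unidirectional policies form a restricted subclass of bidirectional policies (the paper phrases this via the bilevel decomposition $\pi = \pi^H \cdot \pi^L$ with the high-level ordering component fixed, while you phrase it via an explicit lift of $\pi^{\star,u}$ into $\mathcal{M}^b$), so the optimum over the larger class dominates. If anything, your handling of the state correspondence and the trajectory-distribution equality is more careful than the paper's, which implicitly identifies $\mathcal{S}^b$ with $\mathcal{S}^u$ and asserts without justification that $\pi^{\star,u}$ shares the same optimal low-level component $\pi^{L,\star}$ as $\pi^{\star,b}$.
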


See Appendix~\ref{appendix:proof} for its proof. Our bidirectional action enhances flexibility and significantly expands the search space, allowing for a more comprehensive exploration of possible solutions, which offers superior MDP Optimality.

\subsection{Hierarchical Policy Architecture}
We construct raw features, encode them with a GCN-based encoder, and design a hierarchical decision module to ensure adaptive probability output and training efficiency.

\textbf{Feature Constructor.} 
We build the feature input for the subsequent encoder from the current state $s_t$, which includes the processing status of VNR $s^v_t$ and the current physical network situation $s^p_t$.
With comprehensive information on the current state, the agent gains deeper environmental insight, resulting in better decisions.
For the VNR $G^v_t$, the feature constructor takes into account not only various node resource requirements denoted $X^v_{t, N}$, but also aggregates bandwidth resource requirements into the node features, represented as $X^v_{t, L}$.
These features include essential bandwidth metrics, such as maximum, mean, and sum of bandwidth requirements of virtual links adjacent to one node.
To indicate the embedding status, a placement flag $X^v_{t,P}$ is designed for virtual nodes, with a value of 1 indicating that the virtual node has been placed and 0 otherwise.
The VNR features $X^v_t$ are organized as follows: $X^v_t = (X^v_{t, N}, X^v_{t, L}, X^v_{t, P}) \in \mathbb{R}^{|N^v| \times 7}.$

Similarly, the physical network features $X^p_t$ are constructed as follows: $X^p_t = (X^p_{t, N}, X^p_{t, L}, X^p_{t, S}) \in \mathbb{R}^{|N^p| \times 7},$ where $X^p_{t, N}$ denotes available resources of physical nodes, $X^p_{t, L}$ similar to $X^v_{t, L}$ denotes aggregated bandwidth availability, and $X^p_{t, S}$ is a selection flag indicating the status of physical nodes.
A selection flag value of 1 indicates that a physical node has been selected to host a virtual node and 0 otherwise.

\textbf{GNN-based Encoder.} To encode the features of the virtual network $X^v_t$ and the physical network $X^p_t$, into latent representations, $Z^v_t$ and $Z^p_t$, respectively, we adopt a graph neural network (GNN) encoder.
First, both $X^v_t$ and $X^p_t$ undergo the MLP to obtain the initial node representations, denoted $I^v_t$ and $I^p_t$, respectively: $I^v_t = \text{MLP}(X^v_t), I^p_t = \text{MLP}(X^p_t).$

Then, we consider multiple GCN \cite{gnn-gcn} layers as the GNN modules to obtain the latent representations of virtual nodes $\tilde{Z}^v_t$ and physical nodes $\tilde{Z}^p_t$: $\tilde{Z}^v_t = \text{GNN}(I^v_t, A^v), \tilde{Z}^p_t = \text{GNN}(I^p_t, A^p)$, where $A^v$ and $A^p$ is adjacency matrixes of virtual and physical networks. 

Furthermore, to enhance the feature representation ability, we also employ the residual connection method to combine the output of the GNN module with the initial representation: $Z^v_t = \tilde{Z}^v_t + I^v_t, Z^p_t = \tilde{Z}^p_t + I^p_t$.
Finally, we obtain the representation of each virtual and physical node.

\textbf{Hierarchical Decoder with Bilevel Policy.}
In our bidirectional action-based MDP, the action space is represented by the matrix size $|\mathcal{N}^v| \times |\mathcal{N}^p|$, reflecting the number of virtual and physical nodes. The variable and often large size of VNRs contribute to the expansive and dynamic nature of the space. To effectively manage this, we develop a hierarchical decoder with a bilevel policy, ensuring high training efficiency and adaptive action probability generation. Specifically, we abstract this task into two dependent aspects: virtual node ordering and physical node placement. Our bilevel policy, $\pi (a_t | s_t) = \pi^H(n^v | s_t) \cdot \pi^L(n^p | s_t, n^v)$, consist of a high-level ordering policy $\pi^H(n^v | s_t)$ and a low-level placement policy $\pi^L(n^p | s_t, n^v)$. This hierarchical approach reduces the size of policy distribution from $|\mathcal{N}^v| \times |\mathcal{N}^p|$ to $|\mathcal{N}^v| + |\mathcal{N}^p|$, thus significantly enhancing training efficiency.

\textit{High-level ordering policy} selects the appropriate virtual node $n^v_t$ for placement. Concretely, we use an MLP-based compatibility scoring network to calculate the fitness between each virtual node representation and the graph-level representation of the physical network $G^p_t = \text{GMP}(Z_t^p)$. Here, $\text{GMP}(Z) = \frac{1}{|Z|} \sum_{z \in Z} z$  denotes graph mean pooling (GMP), averaging all node representations. Then an MLP is applied to generate compatibility scores for each virtual node:
\begin{equation}
    \tilde{Y}^H = \mathrm{MLP}(Z^v_t + G^p_t) \in \mathbb{R}^{1 \times |\mathcal{N}^v|}.
\end{equation}
Although the VNR's sizes are variable, this layer adaptively generates scores with the shape of $(1, |\mathcal{N}^v|)$.
After masking virtual nodes already placed (i.e., setting their scores to $-\infty$ on $\tilde{Y}^H$), we apply a softmax function to the resultant score $Y^H$ to produce the high-level action probability distribution.
\begin{equation}
    \pi^H(n_t^v | s_t) = \mathrm{softmax}(Y_H).
\end{equation}

\textit{Low-level placement policy} identifies a suitable physical node $n^p_t$ for accommodating the to-be-placed virtual node $n^v_t$, which is selected by $\pi^H$. Similarly, we adopt an MLP-based compatibility scoring network to calculate the fitness between the representation of each physical node and the current context representation of virtual network, including the graph-level representation of virtual network $G^v_t = \text{GMP}(Z^v_t)$ and to-be-placed virtual node's representation $z_{n^v_t}$:
\begin{equation}
    \tilde{Y}^L = \mathrm{MLP}(Z^p_t + G^v_t + z_{n^v_t}) \in \mathbb{R}^{1 \times |\mathcal{N}^p|}.
\end{equation}
To avoid unnecessary exploration, we mask the physical nodes that do not have enough resources or have been selected to obtain the final scores $Y^L$. Then, the low-level action probability distribution is generated:
\begin{equation}
    \pi^L(n^p_t | s_t, n^v_t) = \mathrm{softmax}(Y_L).
\end{equation}

For both two-level probability distributions, we employ the sampling and greedy strategy to select actions during the training and inference phases, respectively.

\subsection{Generalizable Training Method}

Training a general policy for VNRs of varying sizes leads to imbalanced learning of cross-size strategy and generalization issues. Conversely, individualized training of multiple policies for each size is slow to adapt to new sizes, in which policies for large-sized VNRs are prone to suboptimal. To address this, we develop a meta-RL-based training method with a curriculum scheduling strategy. As illustrated in Algorithm~\ref{algo:flagvne-training} (see Appendix~\ref{appendix-pseudocode}), our method enables efficient training of multiple size-specific policies and quick adaptation to new sizes, while balancing the learning process across tasks of varying difficulty and avoiding suboptimal convergence.

\textbf{Meta-RL for VNE.} We treat VNRs of different sizes as distinct tasks and formulate them as multiple MDPs following a distribution $\mathcal{M}_i \sim p(\mathcal{M})$. Note that this distribution of VNR size is bounded and always obviously smaller than the number of physical nodes, following the network service orchestration standards~\cite{vne-nfv-sdn}.
We adopt model-agnostic meta-learning (MAML) as the basic training method~\cite{dl-icml-2017-maml}. 
MAML facilitates the learning of a meta-policy that can be swiftly fine-tuned on new tasks with only a few training samples, which improves generalizability and adaptability.
This training process comprises two stages as follows. Firstly, during the meta-learning process, we iteratively execute the inner loops and outer loops to derive a well-trained meta-policy $\pi_{\phi}$ with cross-task knowledge. Secondly, in the fine-tuning process, we leverage task-specific experiences to fine-tune the meta-policy to a set of size-specific policies $\pi_{{\theta}_i}$ solely through inner loops. 

Concretely, in the inner loop, the meta-policy $\pi_\phi$ is updated to accommodate a specific task $\mathcal{M}_i$ by performing gradient descents with the learning rate $\alpha$ and task-specific data $\mathcal{D}_i$:
\begin{equation}
\theta_i = f\left( \phi, \mathcal{D}_i\right) = \phi - \alpha \nabla_{\phi} \mathcal{L}_{\mathcal{D}_i}({\phi}).
\label{eq:maml-inner}
\end{equation}
Here, $\mathcal{L}(\cdot)$ follows the objective of proximal policy optimization (PPO) algorithm~\cite{drl-ppo}:
\begin{equation}
\mathcal{L}_{\mathcal{D}_i}(\phi)=\mathbb{E}_{(s_t, a_t) \sim \mathcal{D}_i}\left[\min \left(r_{\phi} \hat{A}, \operatorname{clip} \left( r_{\phi}, \epsilon\right) \hat{A})\right)\right],
\label{eq:ppo}
\end{equation}
where $\hat{A}$ denotes the estimated advantage of taking an action. $r_{\phi} = \frac{\pi_{\phi} (a_t | s_t)}{\pi_{\phi_{\text{old}}} (a_t | s_t)}$ denotes the ratio between the current policy $\pi_{\phi}$ and the last updated policy $\pi_{\phi_{\text{old}}}$. The clip function with a hyperparameter $\epsilon$ is used to limit $r_{\phi}$ within the range of $[1-\epsilon, 1+\epsilon]$, improving the stability of policy updates. 
In PPO, the critic uses a GNN-based encoder and GMPs, then inputs concatenated virtual and physical graph representations into an MLP-based decoder to estimate value.

In the outer loop, our objective is to find a meta-policy $\pi_{\phi}$ that learns balanced strategy knowledge required by VNRs of different sizes and exhibits superior generalizability, enabling it to quickly learn optimal task-specific policies:
\begin{equation}
    J_\phi = \mathbb{E}_{\mathcal{M}_i \sim p(\mathcal{M})} \left[ \mathbb{E} \left[ \sum\nolimits_{t=0}^{T} \lambda^t R(s_t, a_t) | \theta_{i}, \mathcal{D}_i \right] \right].
\end{equation}

We update $\phi$ with a meta-learning rate $\beta$ according to average second-order meta-gradient over task-specific policies:
\begin{equation}
\phi \leftarrow \phi - \beta \nabla_{\phi}
 \left(\frac{1}{|\mathcal{M}|} \sum\nolimits_{i = 1}^{|\mathcal{M}|} \mathcal{L}({\theta_{i}}) \right).
\label{eq:maml-outer}
\end{equation}

\begin{figure*}[t]
    \centering
    \includegraphics[width=1.00\textwidth]{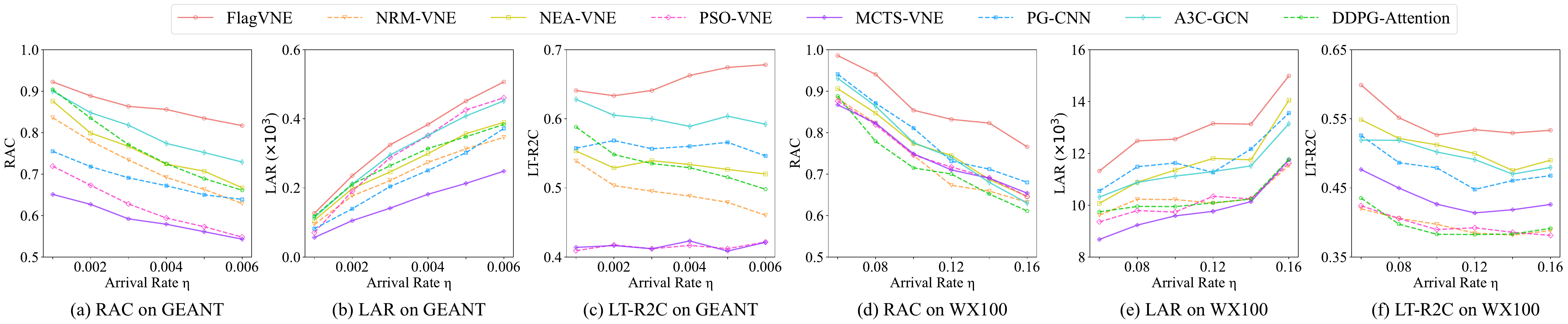}
    \vspace{-1.34em}
    \caption{Experimental results in traffic throughput test. } 
    \label{fig:traffic-throughput-test}
\end{figure*}

\textbf{Curriculum Scheduling Strategy.}  In our preliminary study (see Appendix~\ref{appendix:generalization-of-solving-policy}), we observed that training specific policies for large VNRs often leads to suboptimal convergence. This issue stems from the complexity of large-sized VNRs and the challenges of exploring the solution space to find feasible solutions. This tendency also towards local optima adversely impacts the meta-learning process. Specifically, using large-sized VNRs in the initial stages of meta-learning results in low-quality gradients, which negatively affects the convergence and generalizability of the meta-policy.

To address this challenge, we draw inspiration from curriculum learning~\cite{tpami-2021-curriculum-learning-survey} and propose a curriculum scheduling strategy to gradually integrate larger VNRs into the meta-learning process. 
This strategy enables high-quality initializations for sub-policies of large-sized VNRs, alleviating the problems of suboptimal convergence and compromising meta-policy.
We implement this by maintaining a training task list 
$\mathcal{I}$, initially containing the smallest VNR size.
The meta-learning process begins by focusing on tasks with smaller VNR sizes, which are inherently easier and provide beneficial foundational knowledge for tackling more complex tasks. Policies adeptly trained on these smaller tasks serve as effective initializations for larger VNR tasks, facilitating to mitigating local optima issues.

To achieve a gradual increase in task complexity, we use the entropy metric $H(\pi)$ to evaluate the stability of policy.  For our bilevel policy, we approximate it with $H(\pi) = H(\pi^H) + H(\pi^L)$. 
A lower entropy suggests that the policy is making more confident decisions.
When the policy entropy $H(\pi_{\theta_k})$ for the largest size $k = \mathrm{max}(\mathcal{I})$ currently on the training task list falls below a specified threshold $\delta$, we consider the policy ready to handle more complex tasks. At this point, we introduce the next larger VNR size to the training task list $\mathcal{I}$. This progressive approach allows the meta-policy to adapt and generalize effectively to larger VNRs.

\section{Performance Evaluation}
\label{section:evaluation}
In this section, we evaluate the effectiveness of FlagVNE. 

\begin{table*}[]
\centering
\begin{threeparttable}
\begin{tabular}{c|ccc|ccc}
\toprule
& \multicolumn{3}{c|}{GEANT} & \multicolumn{3}{c}{WX100} \\ \cline{2-7}
& RAC $\uparrow$ & LAR $\uparrow$ & LT-R2C $\uparrow$ & RAC $\uparrow$ & LAR $\uparrow$ & LT-R2C $\uparrow$ \\ 
\midrule
FlagVNE-UniActionNEA & 0.781 & 475.335 & 0.637 & 0.724 & 14334.671 & 0.493 \\
FlagVNE-MetaFree-SinglePolicy & 0.758 & 472.455 & 0.614 & 0.712 & 14170.514 & 0.501 \\
FlagVNE-MetaFree-MultiPolicy & 0.746 & 435.502 & 0.593 & 0.685 & 14069.938 & 0.472 \\
FlagVNE-MetaPolicy & 0.773 & 478.646 & 0.634 & 0.717 & 14292.962 & 0.485 \\
FlagVNE-NoCurriculum & 0.787 & 485.267 & 0.643 & 0.708 & 14144.234 & 0.509 \\ \midrule
\textbf{FlagVNE} & \textbf{0.804} & \textbf{499.303} & \textbf{0.668} & \textbf{0.754} & \textbf{14769.080} & \textbf{0.526} \\ 
\bottomrule
\end{tabular}
\end{threeparttable}
\caption{Results on ablation study. ($\eta = 0.006$ on GEANT and $\eta = 0.18$ on WX100).}
\label{table:ablation-study}
\vspace{-1.em}
\end{table*}

\subsection{Experiment Setup}
\label{section:experiment-setup}

\textbf{Simulations}. Following the latest works~\cite{vne-tpds-2023-att,tsc-2023-hrl-acra}, we conduct experiments on the simulation platform to mimic various realistic network systems. We adopt two topologies, GEANT (40 nodes and 61 links) and WX100 (100 nodes and 500 links)~\cite{dataset-jsac-1988-waxman}, as physical networks. See Appendix~\ref{appendix:topology} for these topologies' descriptions. The multiple-type resources (i.e., CPU, storage, GPU) of physical nodes and bandwidth resources of physical links are uniformly generated within the range of [50, 100] units. In each simulation run, we randomly generate 1000 VNRs with varying sizes ranging from 2 to 10. The virtual nodes within each VNR are randomly interconnected with a probability of 50\%. Additionally, resource demands of each VNR's node and link requirements are uniformly generated within the range of [0, 20] and [0, 50] units, respectively. The lifetime of each VNR is exponentially distributed with an average of 500 time units. The arrival of these VNRs follows a Poisson process with an average rate $\eta$, wherein $\eta$ VNRs are received per unit of time. In subsequent experiments, we first train models with $\eta = 0.001$ on GEANT and $\eta = 0.08$ on WX100, due to their different capacities of physical resources. 
Then we manipulate the value of $\eta$ to emulate network systems with different traffic throughputs and infer with trained models to study the sensitivity of algorithms.

\textbf{Implementations}. During training, we first conduct meta-learning in the initial 20 simulations and then focus on fine-tuning in the subsequent 10 simulations. 
We set the policy entropy threshold $\delta$ to 2.
We implement neural network models with PyTorch and decide reasonable values for hyperparameters following the guide of related studies~\cite{rl-iclr-2022-ppo-tricks,icml-2023-vrp-generalization,wang2021variable,huiguo-mm-nft,dl-adam,co-tsp-generalization}. 
See Appendix~\ref{appendix:implementation-details} for hyperparameter settings on neural networks and meta-RL.

\textbf{Baselines}. To validate the effectiveness of FlagVNE, we compare it with the following SOTA heuristics (NRM-VNE \cite{vne-iotj-2018-nrm}; NEA-VNE \cite{vne-tpds-2023-nea}; PSO-VNE \cite{vne-book-2021-pso}) and RL-based baselines (MCTS-VNE \cite{vne-tcyb-2017-mcts}; PG-CNN \cite{vne-tnsm-2022-pg-cnn}; A3C-GCN \cite{vne-tsc-2023-gcn-mask}; DDPG-Attention \cite{vne-tpds-2023-att}). See Appendix~\ref{appendix:baselines} for their descriptions.

\textbf{Metrics}. The following metrics are widely used to evaluate the long-term operational status of network systems over a period $\mathcal{T}$~\cite{vne-survey}: \textit{request acceptance rate} (RAC), \textit{long-term average revenue} (LAR) and \textit{long-term revenue-to-cost} (LT-R2C). See Appendix~\ref{appendix:baselines} for their definitions.

\subsection{Results and Analysis}

\textbf{Overall Performance.} To simulate diverse and complex scenarios with varying traffic throughputs, we manipulate the arrival rate of VNRs in two settings due to the difference in physical resource capacity: in GEANT, we explore a range of [0.001, 0.006] with a step of 0.001, and in WX100, we investigate a range of [0.08, 0.18] stepped by 0.02.

Fig.~\ref{fig:traffic-throughput-test}(a)(b)(c) and (d)(e)(f) illustrate the performance of all algorithms in GEANT and WX100, respectively. As the arrival rate $\eta$ increases, all algorithms experience a decline in RAC on both topologies, attributed to heightened competition for limited physical resources among VNRs. Despite the variability in algorithm performance across different network topologies, influenced by the varying abundance of physical bandwidth resources, FlagVNE consistently achieves the best performance in all scenarios. We observe that the improvements of FlagVNE are more pronounced at higher values of $\eta$, corresponding to heightened resource competition.  This underscores the importance of searchability and generalizability in network environments with limited resources. Specifically, at $\eta = 0.006$ on GEANT, FlagVNE surpasses A3C-GCN, NEA-VNE and NRM-VNE by margins of 10.4\%, 20.7\%  and 27.9\% on RAC, 10.5\%, 28.1\%, and 44.2\% on LAR, and 12.8\%, 28.4\%, and 45.1\% on LT-R2C. On WX100, compared to A3C-GCN, NEA-VNE and NRM-VNE, FlagVNE shows average improvements over different $\eta$ of 12.4\%, 12.5\% and 17.4\% in RAC, 12.8\%, 10.4\% and 24.3\% on LAR, and 9.1\%, 6.7\% and 36.7\% on LT-R2C, respectively. Overall, FlagVNE demonstrates exceptional performance across various network system conditions.

\textbf{Ablation Study.}
To verify the effectiveness of each proposed component, we build several variations of FlagVNE: 
(1) \textit{FlagVNE-UniActionNEA} replaces the bidirectional action with the unidirectional one and sorts the decision sequence of virtual nodes with NEA~\cite{vne-tpds-2023-nea}. (2) \textit{FlagVNE-MetaFree-SinglePolicy} trains a single general policy with valina PPO, without the help of Meta-RL. (3) \textit{FlagVNE-MetaFree-MultiPolicy} directly trains a set of sub-policies from scratch, without using Meta-RL. (4) \textit{FlagVNE-MetaPolicy} only uses the meta-policy to handle variable-sized VNRs. (5) \textit{FlagVNE-NoCurriculum } discards the curriculum scheduling strategy during the meta-learning process.

We examine their performance under arrival rate settings of $\eta=0.006$ on GEANT and $\eta=0.18$ on WX100. These cases exhibit more intense competition for resources, accentuating the performance differentials stemming from the algorithms' searchability and generalizability.
As shown in Table~\ref{table:ablation-study}, FlagVNE outperforms all variations on three metrics, demonstrating that each component of FlagVNE contributes to the improvement in the final performance. Notably, we observe significant performance declines in FlagVNE-MetaFree-MultiPolicy and FlagVNE-MetaFree-SinglePolicy compared to FlagVNE, which shows the effectiveness of our meta-RL training method with a curriculum scheduling strategy in achieving generalization.


\textbf{Additional Evaluation.} Due to the space limit, we place more experiments and analyses in Appendix~\ref{appendix:additional-validation}. Concretely, in Appendix~\ref{appendix:running-time-test}, we conduct the running time test to verify the efficiency of FlagVNE, and the results show that FlagVNE strikes a better balance between performance and running time. 
We also provide the adaptation and convergence analysis in Appendix~\ref{appendix:convergence}. Results show that FlagVNE can efficiently learn a meta-policy with cross-size knowledge in known distributions and quickly adapt to unseen sizes through fine-tuning. Besides, in Appendix~\ref{appendix:large-scale-system-valadition}, we evaluate FlagVNE's scalability on large-scale network systems, and results demonstrate that FlagVNE consistently outperforms all baseline models, even in this large-scale network system scenario. Finally, in Appendix~\ref{appendix:hyperparameter}, we explore the impact of the key hyperparameter $\epsilon$ on the performance of FlagVNE.





\section{Conclusion}
In this paper, we proposed FlagVNE, a flexible and generalizable RL framework for VNE. Specifically, we developed a bidirectional action MDP modeling approach to enable the joint selection of virtual nodes and physical nodes, which expands the agent's search space. Additionally, we designed a hierarchical recorder with a bilevel policy to ensure adaptive output and high training efficiency. Furthermore, we presented a generalizable training method based on meta-RL that efficiently trains a set of size-specific policies to tackle VNRs of varying scales. We also developed a curriculum scheduling strategy that gradually incorporates larger VNRs, thus alleviating suboptimal convergence. Finally, we conducted extensive experiments to verify the effectiveness of FlagVNE.

\clearpage
\section*{Acknowledgements}
This work was partially supported by National Natural Science Foundation of China (Grant No.92370204), Guangzhou-HKUST(GZ) Joint Funding Program (Grant No.2023A03J0008), Education Bureau of Guangzhou Municipality, Guangdong Science and Technology Department Project funded by China Postdoctoral Science Foundation (Grant No.2023M730785), National Natural Science Foundation of China (Grant No. 62102053).

\bibliographystyle{named}
\bibliography{ijcai24}

\clearpage
\clearpage
\appendix
\newtheorem*{theorem*}{Theorem}

\section{Preliminary Study}
\label{appendix:preliminary-study}
\begin{figure}[t]
    \centering
    \includegraphics[width=.48\textwidth]{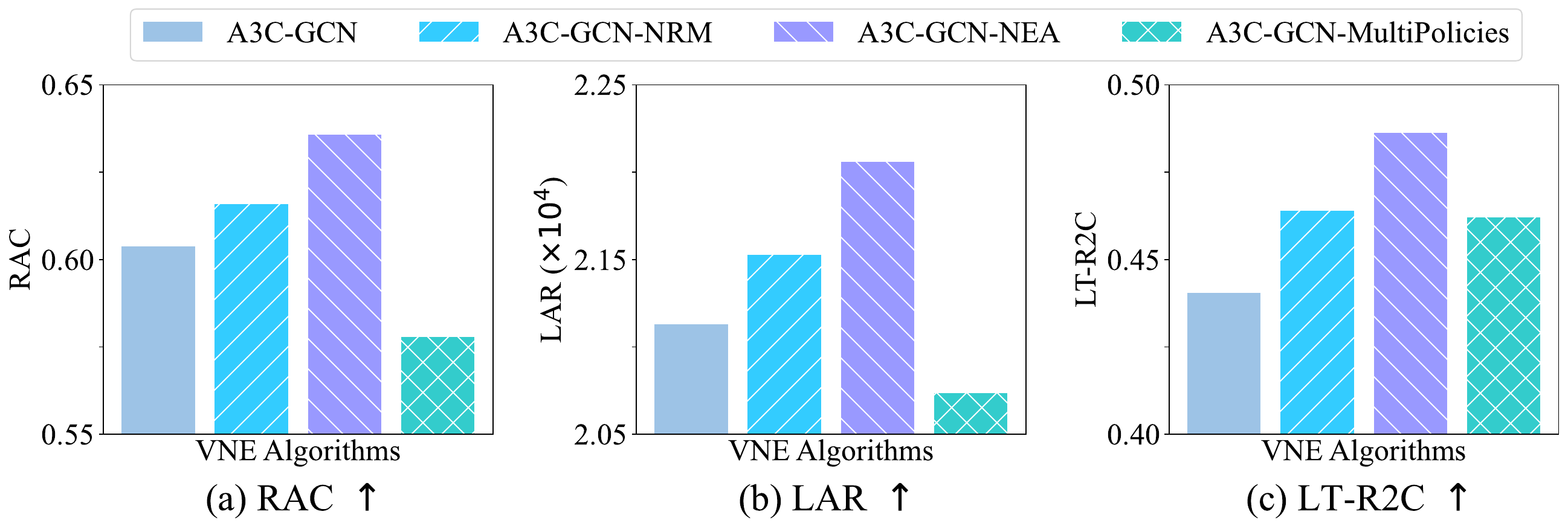}
    \vspace{-1em}
    \caption{Comparative performance of A3C-GCN variants on three metrics: Impact of decision sequence and size-specific policies on VNE. (We conduct experiments using WX100 as the physical network, with a VNR arrival rate of 0.18. All other settings remained consistent with those described in Section~\ref{section:evaluation}.)} 
    \label{fig:preliminary-study}
    \vspace{-1.4em}
\end{figure}

\label{section:motivation}
We conduct a preliminary analysis to reveal the limitations of existing RL-based works and highlight our motivations and latent challenges.
Concretely, we investigate a widely adopted RL-based VNE algorithm, A3C-GCN, used in~\cite{vne-jsac-2020-a3c-gcn,vne-tsc-2023-gcn-mask}. We extend its capabilities by incorporating several methods. Instead of using the default decision sequence based on virtual node ID numbers, we introduce two variants: A3C-GCN-NRM and A3C-GCN-NEA. These variants utilize NRM~\cite{vne-iotj-2018-nrm} and NEA~\cite{vne-tpds-2023-nea} metrics, respectively, to rank virtual nodes and rearrange the decision sequence. 
Furthermore, we develop A3C-GCN-MultiPolicy that fine-tune the pretrained A3C-GCN model to obtain multiple policies tailored to different VNR sizes, rather than use a single one-size-fits-all to accommodate all sizes.
Through the analysis of experimental results, our motivations are attributed to the following two key aspects.

\subsection{The Flexibility of Action Space}
\label{appendix:flexibility-of-action-space}
Most existing approaches based on RL suffer from a unidirectional action design, assuming that the decision sequence of virtual nodes is fixed. They severely limit the action search space from $|\mathcal{N}^p| \times |\mathcal{N}^v|$ to $|\mathcal{N}^p| \times 1$, where $|\mathcal{N}^p|$ and $|\mathcal{N}^v|$ denote the number of physical nodes and virtual nodes, respectively.
However, as shown in Fig.~\ref{fig:preliminary-study}, A3C-GCN, A3C-GCN-NRM and A3C-GCN-NEA exhibit different performance due to their distinct decision sequence of virtual nodes, which highlights the importance of sufficiently exploring different decision sequences to achieve better solutions.
Moreover, the fixed decision sequence of virtual nodes fails to perceive the dynamic nature of the environment and the potential inter-dependencies between decisions made at distinct time steps.
Therefore, due to this action design, the exploration and exploitation of the solution space by the RL agent are inflexible, which restricts the searchability of the algorithm and reduces the probability of discovering high-quality solutions.

To enhance exploration and exploitation and consider the dynamic nature of solving process,
an accessible way is to achieve a joint selection of physical and virtual nodes to eliminate the fixed decision sequence.
But it will pose some challenges, such as the difficulty of variable action probability distribution generation and the training efficiency issue caused by large action space.

\subsection{The Generalization of Solving Policy}
\label{appendix:generalization-of-solving-policy}

\begin{figure}[t]
    \centering
    \includegraphics[width=.40\textwidth]{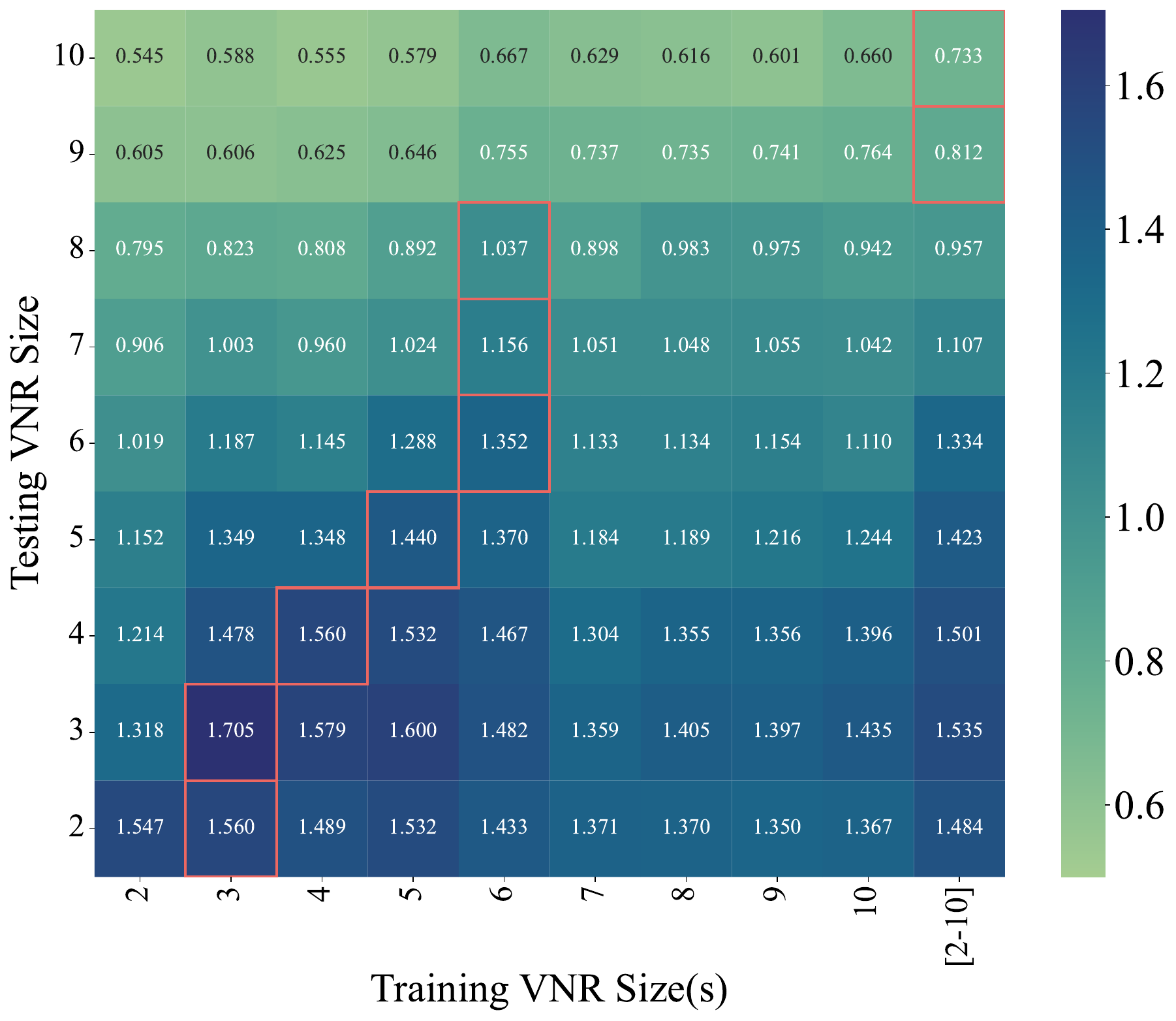}
    \vspace{-1em}
    \caption{Average returns of the one-fits-all-size policy and each size-specific policy on all testing VNR sizes. The red boxes indicate the best performance results for test sizes. In the horizontal axis, [2-10] indicates a well-trained A3C-GCN policy while a single number represents a size-specific policy derived from well-trained A3C-GCN-MultiPolicy. (We use WX100 as the physical network and all training settings are the same as those mentioned in Section~\ref{section:evaluation}. For testing data of each VNR size, to exclude network system dynamics for a fairer comparison, we randomly generated 1000 static instances, including VNR and physical networks, as the benchmark. The performance metric is defined as the average episode return over 1000 instances.)} 
    \label{fig:generalization-of-solving-policy}
    \vspace{-1.4em}
\end{figure}

As revealed in studies~\cite{co-tsp-generalization,icml-2023-vrp-generalization}, 
combinatorial optimization problems on graphs demonstrate variable complexity as graph sizes change. A one-size-fits-all strategy for handling these instances across different sizes often leads to generalization issues. VNE is no exception. Existing methods ignore this diversity and train a universal solving policy with vanilla RL approaches, which fails to effectively balance the strategic knowledge required for varying sizes of VNRs. Additionally, training within a limited range of VNR sizes also incurs a deficiency in generalization and adaptability for previously unseen VNR sizes. Furthermore, training a universal solving policy may perform reasonably well on average but not be optimal for specific VNR sizes. This lack of specialization for different sizes results in suboptimal solutions when dealing with VNRs of varying sizes. Therefore, addressing this generalization problem is crucial to achieving high-quality solutions.

One intuitive solution is to train a set of sub-policies directly to handle VNRs of different sizes. However, as shown in Fig.~\ref{fig:preliminary-study}, A3C-GCN-MultiPolicy that have multiple size-specific policies are even inferior to A3C-GCN on some performance metrics. To further explore this phenomenon, we test the performance of each size-specific strategy on different sizes and provide the results in Fig.~\ref{fig:generalization-of-solving-policy}. We can observe that specific policies trained for VNRs of small sizes often achieve the best performance in their corresponding testing dataset of the same size. Meanwhile, the specific policies of large-sized VNRs exhibit unsatisfactory performance on nearly all testing sizes.  Even on the same test size as the training size, their performance is worse than the general policy. This indicates they are stuck in local optima.

This issue arises due to the more complex solution spaces and stringent constraints associated with larger VNRs, which significantly complicate the exploration of high-reward states and the discovery of feasible solutions. Consequently, when these sub-policies for large-sized VNRs are trained from scratch, they tend to settle into suboptimal solutions and lead to diminished performance. Furthermore, this method based on direct training from scratch lacks quick adaptability to handle situations where unseen VNR sizes occur. When faced with new sizes, the system needs to collect ample data to train policies for the new sizes from scratch, which significantly increases the demand for data.

\section{Problem Formulation}
\label{appendix:problem-formulation}
\subsection{Optimization Objectives}
Acknowledging the stochastic nature of online networking, most existing methods and this work aim to minimize the embedding cost of each arriving VNR onto the physical network. This facilitates the improvement of resource utilization and VNR acceptance rate. To evaluate the quality of solutions, the revenue-to-cost ratio (R2C) serves as a crucial indicator, defined as follows:
\begin{equation}
    \text{R2C} \left(\mathcal{G}^v\right) = \left(\Psi \cdot \text{REV} \left(\mathcal{G}^v\right) \right) / \text{COST}\left(\mathcal{G}^v\right).
\end{equation}

Here, $\Psi$ is the binary variable that indicates the feasibility of a solution. $\Psi = 1$ if the solution of $G^v$ is accepted, and $\Psi = 0$ if it is not. $\text{REV} (\mathcal{G}^v)$ denotes the revenue of the VNR $G^v$ and $\text{COST} (\mathcal{G}^v)$ denotes the embedding cost resulting from the solution, which are calculated as follows:
\begin{equation}
\text{REV} (\mathcal{G}^v) = \frac{1}{|\mathcal{C}|} \sum_{n^v \in \mathcal{N}^v} \sum_{C \in \mathcal{C}} C (n^v) + \sum_{l^v \in \mathcal{L}^v} B (l^v),
\end{equation} 
\begin{equation}
    \text{COST} (\mathcal{G}^v) = \frac{1}{|\mathcal{C}|} \sum_{n^v \in \mathcal{N}^v} \sum_{C \in \mathcal{C}} C (n^v) + \sum_{l^v \in \mathcal{L}^v} H_{l^v} B (l^v),
\end{equation}
where $H_{l^v}$ denotes the hop count of the physical path that routes the virtual link $l^v$.

\subsection{Constraint Conditions}
The process of embedding one VNR $\mathcal{G}^v \in \mathcal{V}$ onto the physical network is formulated as a mapping function $f: \mathcal{G}^v \rightarrow \mathcal{G}^p$. 
This process utilizes two types of boolean variables to make decisions: (1) $x^{m}_{i} = 1$ indicating that virtual node $n^v_m$ is placed in physical node $n^p_i$, and 0 otherwise; (2) $y^{m,w}_{i,j} = 1$ indicating that virtual link $l^v_{m,w} = ({n^v_m}, {n^v_w})$ traverses through physical link $l^p_{i,j} = ({n^p_i}$, ${n^p_j})$, and 0 otherwise. Here, we use the $m$ and $w$ as identifiers of physical nodes and $i$ and $j$ as identifiers of virtual nodes. A VNR is successfully embedded when a feasible mapping solution is found, i.e., the following constraints are satisfied:
\begin{equation}
  \sum_{n^p_i \in \mathcal{N}^p} x^{m}_{i} = 1, \forall n^v_m \in \mathcal{N}^v,
  \label{eq:vne-nm-1}
\end{equation}
\begin{equation}
  \sum_{n^v_m \in \mathcal{N}^v} x^{m}_{i} \leq 1, \forall n^p_i \in \mathcal{N}^p,
  \label{eq:vne-nm-2}
\end{equation}
\begin{equation}
\begin{split}
    x^{m}_{i} C(n^v_m) \leq C(n^p_i),  \forall n^v_m \in \mathcal{N}^v, n^p_i \in \mathcal{N}^p, C \in \mathcal{C},
\end{split}
\label{eq:vne-nm-3}
\end{equation}
\begin{equation}
\begin{split}
\sum_{\!\!\!n^p_i \in \Omega (n^p_k) }\!\!\! y^{m,w}_{i,k} \!-\!\!\! \sum_{\!\!\!n^p_j \in \Omega (n^p_k) } \!\!\! y^{m,w}_{k,j} \!= x^{m}_{k} \!-\! x^{w}_{k},
\forall l^v_{m, w} \!\!\in\! \mathcal{L}^v, n^v_k \in\! \mathcal{N}^p,
\end{split}
\label{eq:vne-lm-1}
\end{equation}
\begin{equation}
\begin{split}
y^{m,w}_{i,j} + y^{m,w}_{j,w} \leq 1, 
\forall l^v_{m, w} \in \mathcal{L}^v, l^p_{i, j} \in \mathcal{L}^p,
\end{split}
\label{eq:vne-lm-2}
\end{equation}
\begin{equation}
\begin{split}
  \!\!\!\sum_{l^v_{m, w} \in \mathcal{L}^v} (y^{m,w}_{i,j} \!+\! y^{m,w}_{j,i}) B(l^v_{m, w}) \leq B((l^p_{i, j})),
  \forall (l^p_{i, j}) \in \mathcal{L}^p.
\end{split}
\label{eq:vne-lm-3}
\end{equation}

Here, $\Omega (n^p_k)$ denotes the neighbors of $n^p_k$. Constraint \eqref{eq:vne-nm-1} ensures that each virtual node is assigned to exactly one physical node, while the constraint \eqref{eq:vne-nm-2} ensures that each physical node accommodates at most one virtual node, thus enforcing a one-to-one mapping between them. Constraint \eqref{eq:vne-nm-3} ensures that each virtual node is assigned to a physical node with sufficient resources where available resources of all types exceed the demands. Constraint \eqref{eq:vne-lm-1} ensures that each virtual link $(n^v_m, n^v_w)$ is routed through a connective physical path from the physical node $n^p_i$ (where virtual node $n^v_m$ is mapped, i.e., $x^m_i = 1$) to the physical node $n^p_j$ (where virtual node $n^v_w$ is mapped, i.e., $x^w_j = 1$). This constraint follows the flow conservation law. Constraint \eqref{eq:vne-lm-2} prevents routing loops and guarantees acyclic routing of physical paths for virtual links. Constraint \eqref{eq:vne-lm-3} ensures that the bandwidth consumption of each physical link does not exceed its available capacity.

\section{Proof of the Superiority of MDP Optimality}
\label{appendix:proof}
\begin{theorem*}
Given two MDPs with the bidirectional and unidirectional action, $\mathcal{M}^b = \langle \mathcal{S}^b, \mathcal{A}^b, P, R, \lambda\rangle$ and $\mathcal{M}^u = \langle\mathcal{S}^u, \mathcal{A}^u, {P}, {R}, \lambda\rangle$, and their optimal policy denote as $\pi^{\star, b}$ and $\pi^{\star, u}$, respectively, we have $\pi^{\star, b} \succeq \pi^{\star, u}$. 
\end{theorem*}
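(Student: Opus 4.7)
The plan is to establish the inequality by a domination argument: I will show that the bidirectional MDP $\mathcal{M}^b$ can simulate every policy available in the unidirectional MDP $\mathcal{M}^u$, so the optimum over the richer set of policies is at least as large. Concretely, since in $\mathcal{M}^u$ the decision sequence over virtual nodes is fixed by some predetermined ordering $\sigma$, every unidirectional action $a^u_t = n^p_t \in \mathcal{N}^p$ can be identified with the bidirectional action $a^b_t = (\sigma(t), n^p_t) \in \mathcal{N}^v \times \mathcal{N}^p$. This embedding $\iota: \mathcal{A}^u \hookrightarrow \mathcal{A}^b$ shows that $\mathcal{A}^u$ is (isomorphic to) a strict subset of $\mathcal{A}^b$.

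The next step is to lift any unidirectional policy $\pi^u$ into a bidirectional policy $\tilde{\pi}^b$ that achieves the same return. I would define $\tilde{\pi}^b(\,\cdot\,|s_t)$ to place all its mass on actions of the form $(\sigma(t), n^p)$, with the conditional distribution over $n^p$ matching $\pi^u(\,\cdot\,|s_t)$. Because the transition kernel $P^b$ and the reward $R$ in $\mathcal{M}^b$ agree with $P^u$ and $R$ on actions of this restricted form (both correspond to placing the same virtual node on the same physical node and running the identical link-routing subroutine), the induced trajectory distributions coincide. Hence
\begin{equation*}
J_{\tilde{\pi}^b} \;=\; \mathbb{E}\!\left[\sum_{t=0}^{T}\lambda^t R(s_t,a_t)\,\Big|\, \tilde{\pi}^b\right] \;=\; \mathbb{E}\!\left[\sum_{t=0}^{T}\lambda^t R(s_t,a_t)\,\Big|\,\pi^u\right] \;=\; J_{\pi^u}.
\end{equation*}

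Applying this lift to the optimal unidirectional policy $\pi^{\star,u}$ yields a bidirectional policy $\tilde{\pi}^{\star,b}$ with $J_{\tilde{\pi}^{\star,b}} = J_{\pi^{\star,u}}$. Since $\pi^{\star,b}$ is optimal in $\mathcal{M}^b$, we have $J_{\pi^{\star,b}} \geq J_{\tilde{\pi}^{\star,b}} = J_{\pi^{\star,u}}$, which by the definition $\succeq$ is exactly $\pi^{\star,b}\succeq\pi^{\star,u}$.

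The main obstacle I anticipate is the subtle mismatch between the state spaces $\mathcal{S}^b$ and $\mathcal{S}^u$: in $\mathcal{M}^u$ the placement-flag component of the VNR state is determined implicitly by the fixed sequence $\sigma$ and the current timestep, whereas in $\mathcal{M}^b$ it is an explicit component of the state. I would handle this by constructing an explicit state-map $\psi:\mathcal{S}^u\to\mathcal{S}^b$ that augments each unidirectional state with the placement pattern consistent with $\sigma$ up to time $t$, and verifying that this map commutes with transitions: $P^b(\psi(s_{t+1})\mid\psi(s_t),\iota(a^u_t)) = P^u(s_{t+1}\mid s_t,a^u_t)$ and $R(\psi(s_t),\iota(a^u_t)) = R(s_t,a^u_t)$. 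Once this commutation is established, the trajectory-distribution equality used above is immediate, and the rest of the argument is a one-line invocation of optimality.
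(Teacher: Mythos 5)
Your proof is correct, but it takes a genuinely different route from the paper's. You argue by policy-class inclusion: you embed the unidirectional action space into the bidirectional one via $\iota(n^p) = (\sigma(t), n^p)$, lift any unidirectional policy to a return-preserving bidirectional policy, and then invoke optimality of $\pi^{\star,b}$ over the larger class to get $J_{\pi^{\star,b}} \ge J_{\pi^{\star,u}}$. The paper instead works through its bilevel factorization: it writes $\pi^{\star,b} = \pi^{H,\star}\cdot\pi^{L,\star}$ and $\pi^{\star,u} = \pi^{H,u}\cdot\pi^{L,\star}$, and argues that since the fixed ordering $\pi^{H,u}$ need not coincide with the optimal ordering $\pi^{H,\star}$, the value under $\pi^{\star,u}$ cannot exceed that under $\pi^{\star,b}$. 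Your argument is the more careful of the two: it does not presuppose that the optimal unidirectional policy shares the \emph{same} low-level component $\pi^{L,\star}$ as the optimal bidirectional policy (the optimal placement policy conditioned on a suboptimal fixed ordering may differ from the one conditioned on the optimal ordering, a point the paper's factorization glosses over), and your explicit state-map $\psi$ addresses the $\mathcal{S}^u$ versus $\mathcal{S}^b$ mismatch that the paper leaves implicit. What the paper's decomposition buys is a direct connection to the hierarchical decoder architecture it proposes --- it exhibits \emph{where} the gain comes from (the ordering level) --- whereas your simulation argument is more elementary, more general, and would apply to any enlargement of the action space, not just this one.
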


\begin{proof}
In $\mathcal{M}^b$, the bidirectional action at the timestep $t$, $a_t = (n^p_t, n^v_t)$, is the joint selection of virtual and physical nodes. The optimal policy for $\mathcal{M}^b$ can be defined as:

\begin{equation}
    \pi^{\star,b} (a_t | s_t) =   \pi^{H,\star} (n^v_t | s_t) \cdot \pi^{L,\star} (n^p | s_t, n^v_t),
\end{equation}
where $\pi^{H,\star} (n^v_t | s_t)$ optimally orders virtual nodes and $\pi^{L,\star} (n^p | s_t, n^v_t)$ optimally selects physical nodes given the virtual node selection.

In $\mathcal{M}^u$, the predefined policy $\pi^{H,u}(n^v_t | s_t)$ governs the decision sequence of virtual nodes. Similarly, The optimal policy for $\mathcal{M}^u$ can be similarly defined as
\begin{equation}
    \pi^{\star,u} (a_t | s_t) =   \pi^{H,u} (n^v_t | s_t) \cdot \pi^{L,\star} (n^p | s_t, n^v_t).
\end{equation}
Notably, $\pi^{H,u}$ may not be optimal.

Following the standard reinforcement learning framework~\cite{book-2018-rl-introduction},  the value of a state $s$ under a policy $\pi$, denoted $V_{\pi}(s)$ is the expected return when starting in $s$ and following $\pi$ thereafter. For our bilevel policy, $V_{\pi}(s)$ can be formulated as:
\begin{equation}
    V_{\pi}(s) = \mathbb{E}_{n^v \sim \pi^H, n^p \sim \pi^L} [ \sum_{t=0}^{T} \lambda^t R(s_t, a_t) | s_0 = s].
\end{equation}
If and only if  $V_{\pi}(s) \ge V_{\pi^{\prime}}(s)$ for all state, $\pi \succeq \pi^{\prime}$.

Since $\pi^{H,u}$ in $\pi^{\star,u}$ is not necessarily optimal, for some states, it might not maximize the expected return, i.e., $\pi^{H,u}(n^v | s_t) \neq \pi^{H,\star}(n^v | s_t)$.
In contrast, $\pi^{H,\star}$ in $\pi^{\star,b}$ is optimal by definition. Thus, for some states, we have 
\begin{equation}
    V_{\pi^{\star,b}}(s) > V_{\pi^{\star,u}}(s).
\end{equation}
This inequality demonstrates the potential sub-optimality of $\pi^{\star,u}$ compared to $\pi^{\star,b}$.

Given the above, we conclude that $\pi^{\star, b} \succeq \pi^{\star, u}$.
\end{proof}

\section{Pseudocode of FlagVNE Training}
\label{appendix-pseudocode}

We describe FlagVNE's training process in Algorithm~\ref{algo:flagvne-training}.

\begin{algorithm}[t]
    \small
    \caption{\small Training Process of FlagVNE }
    \label{algo:flagvne-training} 
    \Input{Initial meta-policy $\phi$; Policy set $\Theta = \{\phi\}$;\\ Meta learning rate $\beta$;
     Task learning rate $\alpha$; \\
     Policy entropy threshold $\delta$
    }
    \Output{Trained policies set $\Theta$;}
    \BlankLine
    \tcp{Meta-learning Process}
    Initialize the training task ID list $\mathcal{I} = \{1\}$;\\
    \While{not done}
    {
       Collect the trajectory memory $\mathcal{D}$ by interactions;\\
        Split $\mathcal{D}$ into $\{\mathcal{D}_1,\cdots,\mathcal{D}_{|\mathcal{M}|}\}$ based on VNR' size;\\
        Analyze the task distribution $\mathcal{M}_i \sim p(\mathcal{M})$;\\
        \For{$i = 1, 2, \cdots, |\mathcal{M}|$}{
            \If{$i \notin \mathcal{I}$}{
                \Continue
            }
            $\theta_i \leftarrow \text{DeepCopy}(\phi)$;\\
            Update $\theta_i$ with Eq. \eqref{eq:maml-inner} and \eqref{eq:ppo}; \textit{// Inner loop}\\
        }
        Update $\phi$ with Eq. \eqref{eq:maml-outer}; \textit{// Outer loop}\\
        \tcp{Curriculum Scheduling Strategy}
        Get the current most complex task ID $k = \mathrm{max}(\mathcal{I})$;\\
        \If{$H(\pi_{\theta_k}) < \delta$ and $k < |\mathcal{M}|$}{
            $\mathcal{I} \leftarrow \mathcal{I} \cup \{k + 1\}$
        }
    }
    \tcp{Fine-tuning Process}
    \For{$i = 1, 2, \cdots, |\mathcal{M}|$}{
        $\theta_i \leftarrow \text{DeepCopy}(\phi)$;\\
    }
    \While{not done}
    {
    Collect task trajectory memories $\{\mathcal{D}_1,\cdots,\mathcal{D}_{|\mathcal{M}|}\}$;\\
    \For{$i = 1, 2, \cdots, |\mathcal{M}|$}{
        Update $\theta_i$ with Eq. \eqref{eq:maml-inner} and \eqref{eq:ppo}; \textit{// Inner loop}\\
    }
    }
    \For{$i = 1, 2, \cdots, |\mathcal{M}|$}{
        $\Theta \leftarrow \Theta \cup \{ \theta_i \};$
    }
    
\end{algorithm}

\section{Experimental Details}
\label{appendix:experimental-details}

\subsection{Topology Descriptions}
\label{appendix:topology}
The descriptions of used realistic topologies are as follows:
\begin{itemize}
    \item \textit{GEANT} is a network infrastructure linking Europe's national research and education networks. It acts as a pan-European backbone, connecting European researchers, academics, and students, and extends globally to over half the world's countries. This network topology has 40 nodes and 64 lines, with a density of 0.0821.
    \item \textit{WX100} is generated following the Waxman graph method~\cite{dataset-jsac-1988-waxman}. The Waxman graph is a type of random graph model, widely used in the field of communication and network simulation. WX100 has 100 nodes and 500 links, with a density of 0.1010.
\end{itemize}

\subsection{Implementation Details}
\label{appendix:implementation-details}
Detailed hyperparameter settings are as follows: each neural network has a hidden dimension $H$ of 128 and the GNN module consists of $K = 3$ GCN layers. We train this model using Adam optimizer~\cite{dl-adam}, with learning rates $\alpha$ and $\beta$ of 0.001 and a batch size of 128. In PPO, the RL discounted factor $\lambda$ is set to 0.99, the coefficient of critic loss is set to 0.5, and the number of repeat times is set to 10. All experiments are carried out on a server equipped with 24 Intel Xeon E5-2650 v4 @ 2.20GHz CPUs and 4 V100 GPUs. 

\subsection{Baseline Descriptions}
\label{appendix:baselines}
The descriptions of compared baselines are as follows:
\begin{itemize}
    \item NRM-VNE~\cite{vne-iotj-2018-nrm} sorts virtual and physical nodes based on a node resource management metric. 
    Greedy matching and breadth-first search are then used for node mapping and link mapping, respectively.
    \item NEA-VNE~\cite{vne-tpds-2023-nea} ranks virtual and physical nodes with an essentiality assessment metric and then performs node mapping and link mapping similar to~\cite{vne-iotj-2018-nrm}.
    \item PSO-VNE~\cite{vne-book-2021-pso} uses particle swarm optimization (PSO) to explore the VNE's solution space.
    \item MCTS-VNE~\cite{vne-tcyb-2017-mcts} uses the Monte Carlo tree search algorithm to search solutions.
    \item PG-CNN~\cite{vne-tnsm-2022-pg-cnn} builds a policy network with convolutional neural network (CNN) and trains it with PG algorithm.
    \item A3C-GCN~\cite{vne-tsc-2023-gcn-mask} constructs a policy network with GCN and MLP and uses A3C algorithm as the training method. It extends the method proposed in \cite{vne-jsac-2020-a3c-gcn} with an action mask mechanism.
    \item DDPG-Attention~\cite{vne-tpds-2023-att} builds an attention-based policy and trains it with DDPG algorithm.
\end{itemize}

\subsection{Performance Metrics}
\label{appendix:metrics}
The following metrics are widely used to evaluate the long-term operational status of the network system over a period $\mathcal{T}$~\cite{vne-survey,vne-jsac-2020-a3c-gcn}:

\textit{Request Acceptance Rate} (RAC) quantifies the ratio of accepted VNRs to the total number of VNRs arrived, indicating the system’s ability to meet user service demands, defined as:
\begin{equation}
\text{RAC} = \frac{\sum^\mathcal{T}_{t=0} |\mathcal{V}_s(t)|}{\sum^\mathcal{T}_{t=0} |\mathcal{V}(t)|},
\end{equation}
where $\mathcal{V}(t)$ and $\mathcal{V}_s(t)$ denote the set of totally arrived and accepted VNRs at the unit of time slot $t$, respectively. 

\textit{Long-term Average Revenue} (LAR) quantifies the total gained revenue, directly assessing the financial performance of the ISP, defined as:
\begin{equation}
\text{LAR} = \left( \sum^\mathcal{T}_{t=0} \sum_{\mathcal{G}^v \in {\mathcal{V}}_s(t)} \text{REV} (\mathcal{G}^v) \times d^v \right) / \mathcal{T}.
\end{equation}
where $d^v$ denotes the lifetime of VNR $\mathcal{G}^v$.
\textit{Long-term R2C} (LT-R2C) quantifies the overall revenue-to-cost ratio, evaluating the solution quality of all accepted VNRs and resource utilization, defined as:
\begin{equation}
\text{LT-R2C} = \frac{\sum^\mathcal{T}_{t=0} \sum_{\mathcal{G}^v \in {\mathcal{V}}_s(t)} \text{REV}(\mathcal{G}^v) \times d^v}
{\sum^\mathcal{T}_{t=0} \sum_{\mathcal{G}^v \in {\mathcal{V}}_s(t)} \text{COST}(\mathcal{G}^v) \times d^v}.
\end{equation}

\section{Additional Validation}
\label{appendix:additional-validation}

\subsection{Running Time Test}
\label{appendix:running-time-test}
Due to the low-latency requirements of network systems, VNE algorithms should provide solutions within an acceptable time. The average running time over different traffic throughputs same to the above settings of each algorithm is illustrated in Table~\ref{table:avg-running-time}. Compared to PSO-VNE and MCTS-VNE which are quite time-consuming, FlagVNE and other algorithms can offer solutions more efficiently. While NRM-VNE achieves the fastest solving speeds, it is noteworthy that FlagVNE markedly outperforms NRM-VNE on evaluation metrics. Overall, FlagVNE strikes a better balance between performance and running time.

\begin{table}[]
\centering
\begin{threeparttable}
\begin{tabular}{c|cc}
\toprule
& \multicolumn{2}{c}{Average Running Time (s) $\downarrow$ }  \\ \cline{2-3}
& \quad GEANT \quad & \quad WX100 \quad \\ 
\midrule
NRM-VNE & 10.079 & 28.079 \\
NEA-VNE & 31.011 & 238.403 \\
PSO-VNE & 1330.706 & 1516.340\\
MCTS-VNE & 240.195 & 679.007 \\
PG-CNN & 75.259 & 203.965 \\
A3C-GCN & 47.079 & 204.073 \\
DDPG-Attention & 81.713 & 164.355 \\
FlagVNE & 84.987 & 239.251 \\ 
\bottomrule
\end{tabular}
\begin{tablenotes}
\small
\item[*] The average simulation time (seconds) over various $\eta$
\end{tablenotes}
\end{threeparttable}
\vspace{-0.6em}
\caption{Average running time in traffic throughput test. }
\label{table:avg-running-time}
\vspace{-1.4em}
\end{table}

\begin{figure}[t]
    \centering
    \includegraphics[width=.45\textwidth]{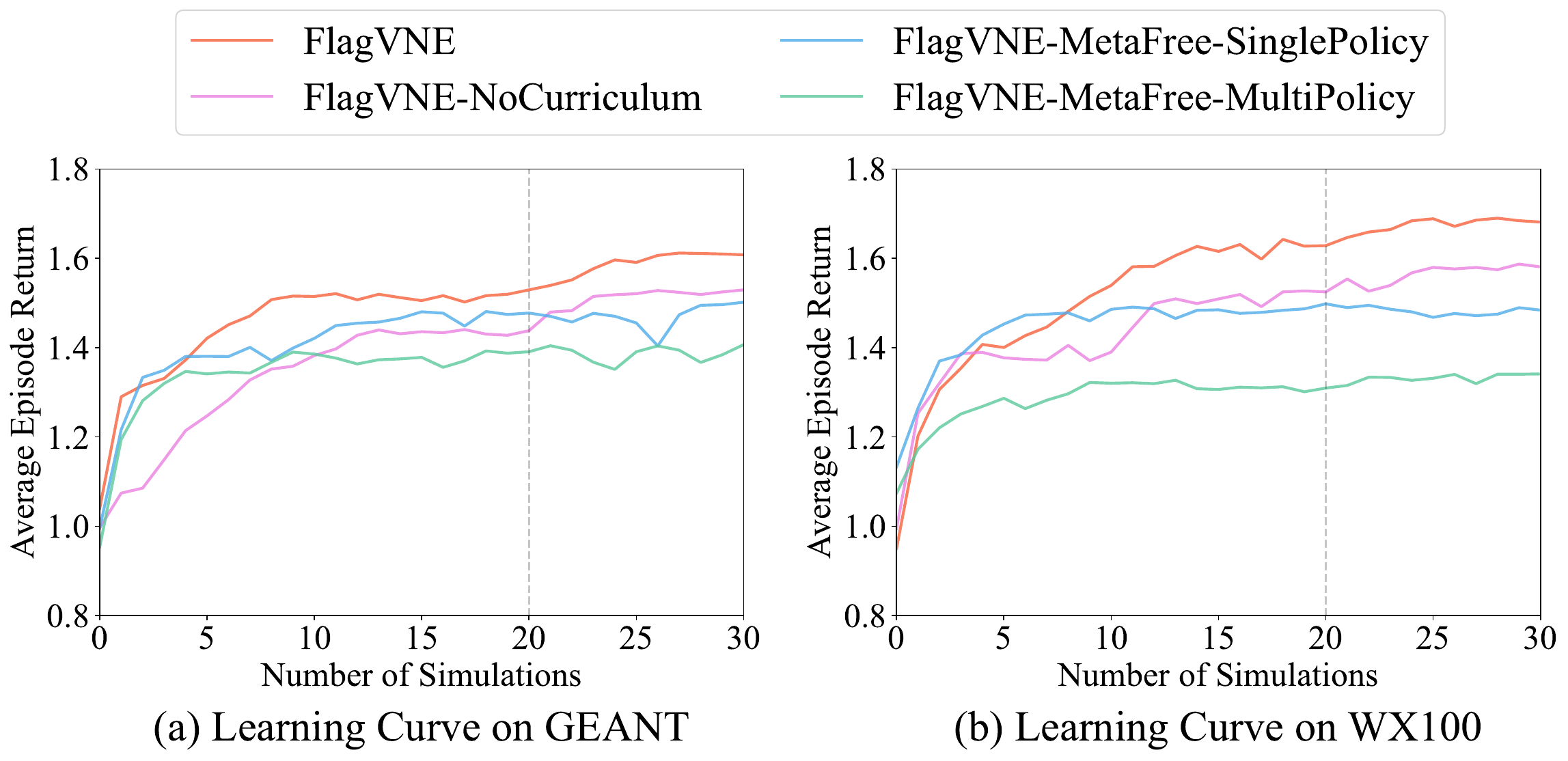}
    \vspace{-1em}
    \caption{Learning curves over simulations. Within each simulation, there are 1000 VNRs, i.e., 1000 episodes and we average their returns.  ($\eta = 0.001$ on GEANT and $\eta = 0.06$ on WX100.)} 
    \label{fig:learning-curves}
    \vspace{-0.6em}
\end{figure}

\subsection{Adaptation and Convergence Analysis}
\label{appendix:convergence}

We provide the analysis of learning curves on both known distribution and unseen size.

\textbf{Training on Known Distribution.}
Fig.~\ref{fig:learning-curves} presents the learning curves of FlagVNE alongside three other variations, trained on a known distribution of VNR's size [2-10]. We observe that FlagVNE achieves higher average returns at convergence compared to both FlagVNE-MetaFree-SinglePolicy and FlagVNE-MetaFree-MultiPolicy. This outcome highlights the significance of generalization in training methods and validates the efficacy of our generalizable training approach. Additionally, FlagVNE's superior performance over FlagVNE-NoCurriculum underscores the effectiveness of our curriculum scheduling strategy in mitigating suboptimal convergence and enhancing the overall learning process.

\textbf{Adaptation to Unseen Size.}
\label{appendix:adaptation-to-unseen-size}
\begin{figure}[t]
    \centering
    \includegraphics[width=.45\textwidth]{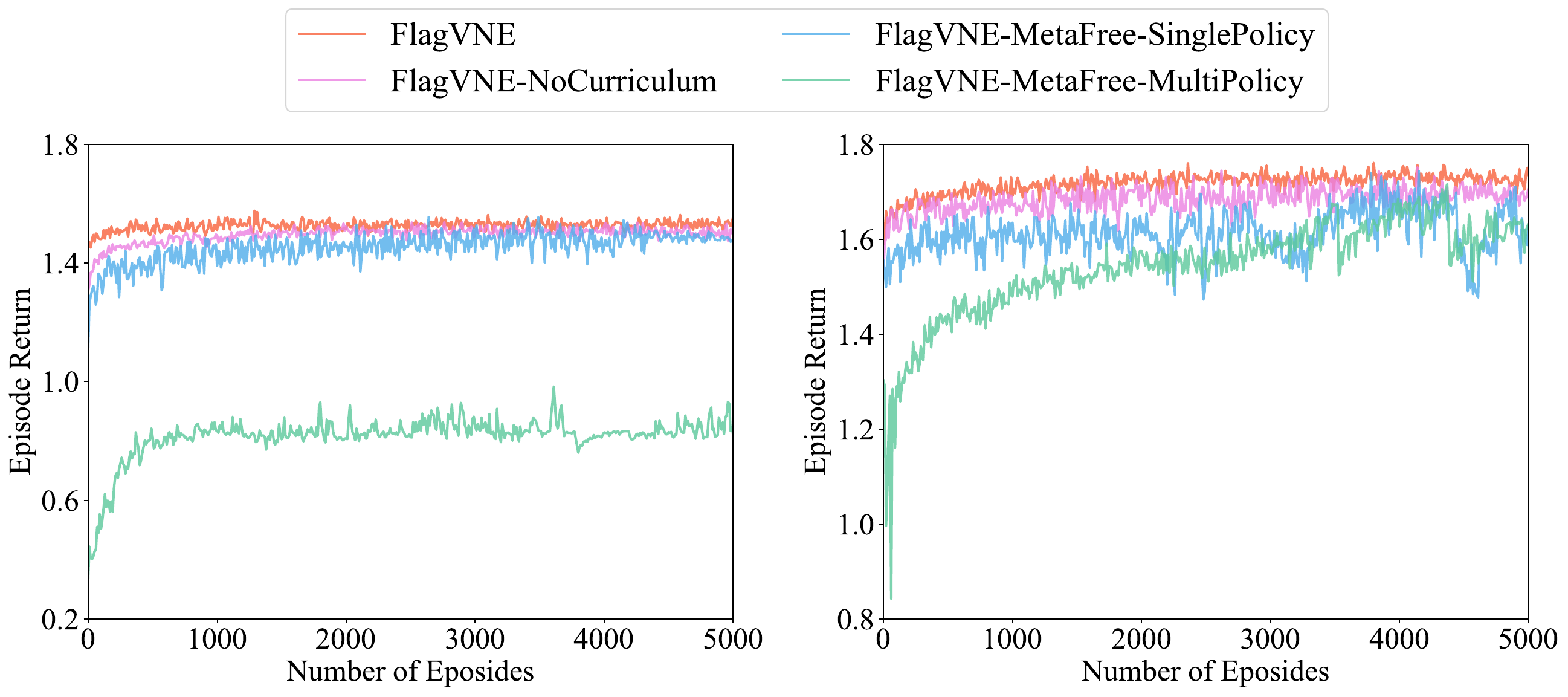}
    \vspace{-1em}
    \caption{Learning curves for the unseen size. (There are 100 VNRs in one simulation and each VNR's size is set to 12).} 
    \label{fig:adaptation-to-unseen-task}
    \vspace{-1.0em}
\end{figure}
To assess FlagVNE's adaptability to previously unseen VNR sizes, we conduct experiments with all incoming VNRs in the network system set to size 12, a size not included in earlier training phases. Given the heightened resource demands of these larger-sized VNRs, we limited the number of simultaneous VNRs to 1 per simulation, maintaining all other simulation parameters as described in Section~\ref{section:experiment-setup}. During this phase, we fine-tuned the meta-policies of both FlagVNE and FlagVNE-NoCurriculum, as well as FlagVNE-MetaFree-SinglePolicy. Meanwhile, FlagVNE-MetaFree-MultiPolicy underwent training to develop a new size-specific policy from scratch. As shown in Fig.~\ref{fig:adaptation-to-unseen-task}, FlagVNE achieves rapid convergence within a small amount of training data, significantly reducing the data requirements. This highlights the benefits of utilizing the initial meta-policy that acquires cross-task knowledge, which enables FlagVNE to rapidly adapt to unseen-size-specific sub-policies. Notably, in the case of GEANT, FlagVNE-MetaFree-MultiPolicy fails to converge within 5000 epochs. This outcome is mainly due to GEANT's smaller topology and constrained resources, which limit the exploration of feasible solutions. Such conditions pose significant challenges for a policy that lacks prior experience or foundational knowledge, further emphasizing the effectiveness of our meta-learning approach in environments with limited resource scenarios.

\subsection{Scalability Valadition}
\label{appendix:large-scale-system-valadition}

\begin{table}[]
\centering
\caption{Results on Scalability Validation.}
\begin{threeparttable}
\begin{tabular}{c|ccc}
\toprule
Algorithm & RAC $\uparrow$ & LAR ($\times 10^6) \uparrow$ & LT-R2C $\uparrow$ \\ \midrule
NRM-VNE & 0.631 & 0.710772 & 0.507 \\
NEA-VNE & 0.857 & 1.186615 & 0.690 \\
PSO-VNE & 0.805 & 1.042604 & 0.537 \\
MCTS-VNE & 0.782 & 0.968175 & 0.563 \\
PG-CNN & 0.851 & 1.046523  & 0.548 \\
A3C-GCN & 0.869 & 1.147116  & 0.715 \\
DDPG-Attention & 0.796 &  1.013670  & 0.617 \\ \midrule
\textbf{FlagVNE} & \textbf{0.932} & \textbf{1.347162} & \textbf{0.744} \\ \bottomrule
\end{tabular}
\end{threeparttable}
\label{table:scalability-validation}
\vspace{-1.4em}
\end{table}

To further validate the effectiveness of FlagVNE in large-scale network systems, we study the performance of all algorithms in large-scale networks. Following previous works~\cite{vne-jsac-2020-a3c-gcn,kdd-2023-gal-vne}, we generate a random Waxman topology with 500 nodes and nearly 13000 links, named WX500~\cite{dataset-jsac-1988-waxman}. We also increase the VNE size distribution to a uniform distribution from 2 to 20, and the arrival rate of VNR $\eta$ is set to 3. For the training of FlagVNE, we execute the meta-learning in the initial 40 simulations and then conduct the fine-tuning with 10 simulations. All other simulation and hyperparameter settings remain consistent with those outlined in Section~\ref{section:experiment-setup}.
The results of all algorithms are shown in Table~\ref{table:scalability-validation}. We observe that FlagVNE and NRM-VNE achieve the best and worst performance, respectively. Compared to other baselines, FlagVNE presents clear performance advantages, which demonstrates its effectiveness in large-scale network systems.

\subsection{Hyperparameter Sensitivity}
\label{appendix:hyperparameter}

\begin{figure}[t]
    \centering
    \includegraphics[width=.48\textwidth]{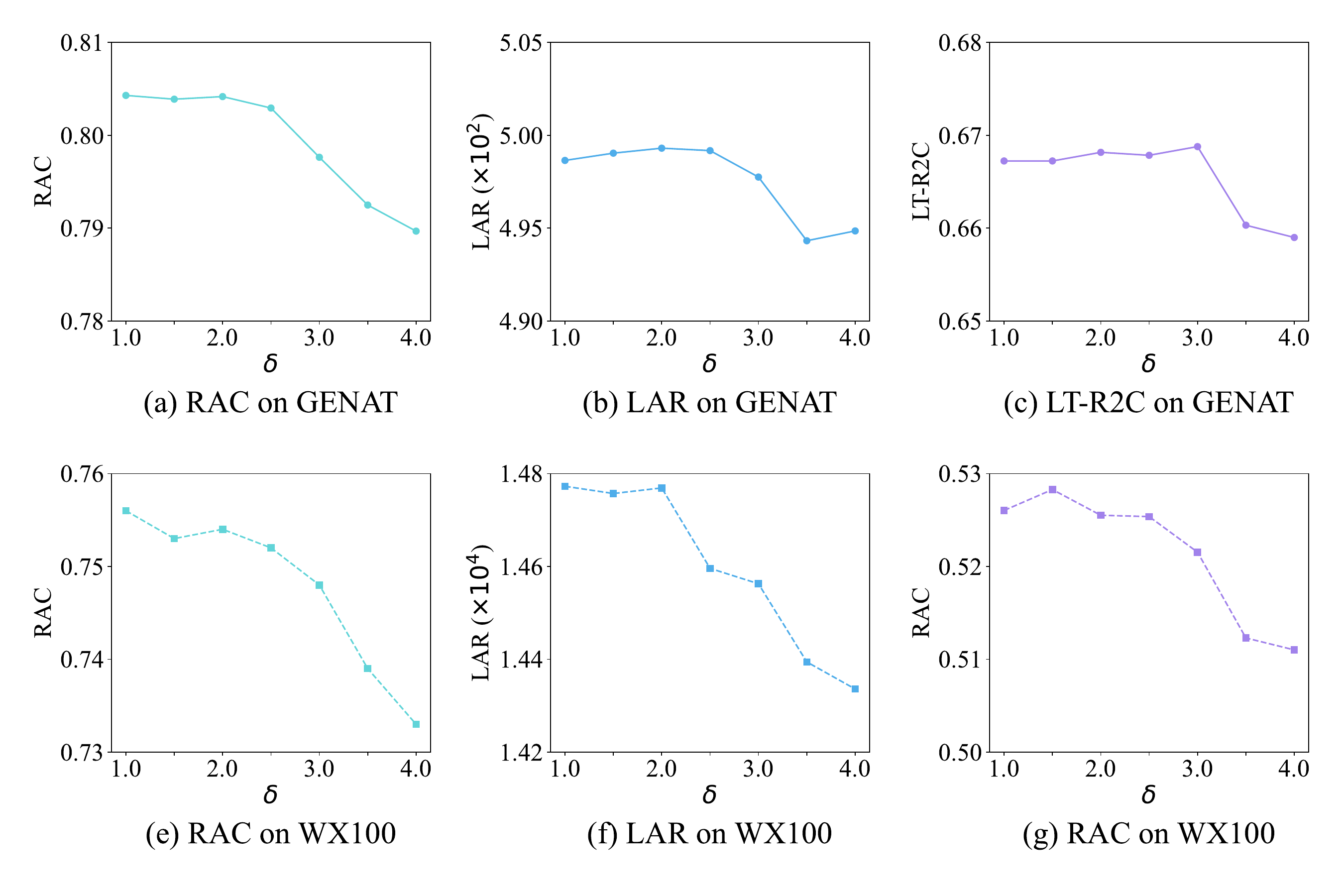}
    \vspace{-1.6em}
    \caption{The impact of $\delta$ on FlagVNE's performance. ($\eta$ = 0.001 on GEANT and $\eta$ = 0.06 on WX100.)} 
    \label{fig:hyperparameter-sensitivity}
    \vspace{-1.4em}
\end{figure}

We investigate the impact of the policy stability threshold $\delta$ on FlagVNE's performance. We keep all other simulation and training parameters consistent with those detailed in Section~\ref{section:experiment-setup}.
The experimental results are shown in Fig.~\ref{fig:hyperparameter-sensitivity}, where (a)(b)(c) and (e)(f)(g) show the testing results on GEANT at $\eta =  0.001$ and WX100 at $\eta =  0.18$, respectively.
We observe that FlagVNE exhibit relative stability within a $\delta$ range of [1.0, 2.5]. 
This stability is attributed to the higher $\delta$ value signals more consistent and confident decision-making by the current meta-policy for the currently largest size VNRs. Such policies can serve as a better initialization of specific policies tailored to larger VNRs. To trade off performance and efficiency, we set this parameter to 2.
Notably, FlagVNE's performance on WX100 demonstrates a higher sensitivity to  $\delta$ compared to GEANT. This increased sensitivity can be ascribed to the more challenging exploration environment in WX100, where the policy is more prone to suboptimal convergence. This observation underscores the value of our curriculum scheduling strategy, which aids in mitigating the issues associated with such exploration challenges by gradually increasing task complexity.

\end{document}